\def\1{\bm{1}}
\DeclareMathAlphabet{\mathsfit}{\encodingdefault}{\sfdefault}{m}{sl}
\SetMathAlphabet{\mathsfit}{bold}{\encodingdefault}{\sfdefault}{bx}{n}
\def\gS{{\mathcal{S}}}
\newcommand{\opnorm}[1]{\|#1\|_{\mathrm{op}}}
\newcommand{\E}{\mathbb{E}}
\newcommand{\R}{\mathbb{R}}
\title[Warm Start Marginal Likelihood Optimisation for Iterative Gaussian Processes]{Warm Start Marginal Likelihood Optimisation\\for Iterative Gaussian Processes}
 \author{\Name{Jihao Andreas Lin}\textsuperscript{\ensuremath{1,2}} \Email{jal232@cam.ac.uk}\\
  \Name{Shreyas Padhy}\textsuperscript{\ensuremath{1}} \Email{sp2058@cam.ac.uk}\\
  \Name{Bruno Mlodozeniec}\textsuperscript{\ensuremath{1,2}} \Email{bkm28@cam.ac.uk}\\
  \Name{Jos\'e Miguel Hern\'andez-Lobato}\textsuperscript{\ensuremath{1}} \Email{jmh233@cam.ac.uk}\\
  \addr \textsuperscript{\ensuremath{1}}University of Cambridge \qquad
  \addr \textsuperscript{\ensuremath{2}}Max Planck Institute for Intelligent Systems}
\begin{document}

\maketitle


\section{Introduction}
\label{sec:intro}
Gaussian processes \citep{GPML} are a versatile probabilistic machine learning model that have found great success in many applications, such as Bayesian optimisation of black-box functions \citep{Snoek2012} or data-efficient learning in robotics and control \citep{Deisenroth2015}.
However, their effectiveness often depends on performing model selection, which amounts to finding good estimates of quantities such as kernel hyperparameters, and the amount of observation noise prescribed in the likelihood.
For Gaussian processes, these quantities are typically learned by maximising the marginal likelihood of the training data, which balances the expressiveness and the complexity of a model in representing the training data.
Unfortunately, conventional approaches which use the Cholesky factorisation have limited scalability, because the computational costs and memory requirements are respectively cubic and quadratic in the amount of training data.

Many methods have been developed to improve the scalability of Gaussian processes.
Typically, they either leverage a handful of judiciously chosen inducing points to represent the training data sparsely; or solve large systems of linear equations using iterative methods.
Sparse methods \citep{candela2005,titsias09,hensman13} are fundamentally limited in the number of inducing points, because the same cubic and quadratic scaling of compute and memory requirements still applies to the number of inducing points.
With increasingly large or sufficiently complex data, a limited number of inducing points can no longer accurately represent the original data.
In contrast, iterative methods \citep{gardner18,lin2023sampling, wu2024largescale} attempt to solve the original problem up to a specified numerical precision, therefore allowing a trade-off between compute time and accuracy of a solution.
Nonetheless, they can be slow in the large data regime due to slow convergence properties, sometimes requiring several days of training time despite leveraging parallel compute capabilities \citep{WangPGT2019exactgp}.

In this work, we consider marginal likelihood optimisation for iterative Gaussian processes.
We introduce a three-level hierarchy of marginal likelihood optimisation for iterative Gaussian processes (\Cref{fig:diagram}), and identify that the computational costs are dominated by solving sequential batches of large positive-definite systems of linear equations (\Cref{fig:train_solver_time}).
We then propose to amortise computations by reusing solutions of linear system solvers as initialisations in the next step, providing a \textit{warm start}.
Finally, we discuss the necessary conditions and quantify the consequences of warm starts (\Cref{thm:bound-on-biased-optimum}) and demonstrate their effectiveness on regression tasks (\Cref{tab:results_main}), where warm starts achieve the same results as the conventional procedure while providing up to a $16 \times$ average speed-up among datasets.
\begin{figure}[t]
    \centering
    \includegraphics{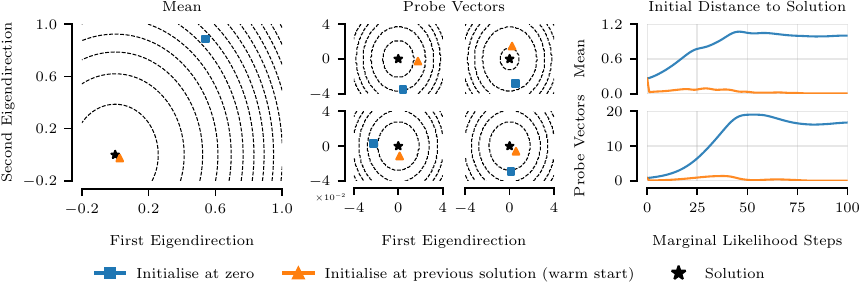}
    \vspace{-0.3cm}
    \caption{Two-dimensional cross-sections of quadratic objectives targeted by linear solvers after twenty marginal likelihood steps on the \textsc{pol} dataset, centred at the solution and visualised along eigendirections corresponding to the two largest eigenvalues (left), and evolution of the distance between initialisation and solution measured as root-mean-square error with respect to the norm induced by the curvature of the quadratic objective (right). Initialising at the previous solution (warm start) substantially reduces the initial distance to the solution.}
    \label{fig:2d_cross_section}
    \vspace{-0.5cm}
\end{figure}
\section{Gaussian Process Regression and Marginal Likelihood Optimisation}
\label{sec:background}
Formally, a Gaussian process is a stochastic process $f: \mathcal{X} \to \mathbb{R}$, such that, for any finite subset $\{ x_i \}_{i=1}^n \subset \mathcal{X}$, the set of random variables $\{ f(x_i) \}_{i=1}^n$ follows a multivariate Gaussian distribution.
In particular, $f$ is uniquely identified by a mean function $\mu(\cdot) = \mathbb{E}[f(\cdot)]$ and a positive-definite kernel function $k(\cdot, \cdot'; \bm{\vartheta}) = \mathrm{Cov}(f(\cdot), f(\cdot'))$ with kernel hyperparameters $\bm{\vartheta}$.
We write $f \sim \mathrm{GP}(\mu, k)$ to express that $f$ is a Gaussian process with mean $\mu$ and kernel $k$.

For the purpose of Gaussian process regression, let the training data consist of $n$ inputs $\bm{x} \subset \mathcal{X}$ and corresponding targets $\bm{y} \subset \mathbb{R}$.
We consider the Bayesian model $y_i = f(x_i) + \epsilon_i$, where each $\epsilon_i \sim \mathcal{N}(0, \sigma^2)$ identically and independently, and $f \sim \mathrm{GP}(\mu, k)$, where we assume $\mu = 0$ without loss of generality.
The posterior of this model is $f | \bm{y} \sim \mathrm{GP}(\mu_{f|\bm{y}}, k_{f|\bm{y}})$, with
\begin{align}
    \mu_{f|\bm{y}}(\cdot) &= k(\cdot, \bm{x}; \bm{\vartheta}) (k(\bm{x}, \bm{x}; \bm{\vartheta}) + \sigma^2 \mathbf{I})^{-1} \bm{y}, \\
    k_{f|\bm{y}}(\cdot, \cdot') &= k(\cdot, \cdot'; \bm{\vartheta}) - k(\cdot, \bm{x}; \bm{\vartheta}) (k(\bm{x}, \bm{x}; \bm{\vartheta}) + \sigma^2 \mathbf{I})^{-1} k(\bm{x}, \cdot'; \bm{\vartheta}),
\end{align}
where $k(\cdot, \bm{x}; \bm{\vartheta})$, $k(\bm{x}, \cdot; \bm{\vartheta})$ and $k(\bm{x}, \bm{x}; \bm{\vartheta})$ refer to pairwise evaluations, resulting in a $1 \times n$ row vector, a $n \times 1$ column vector and a $n \times n$ matrix respectively.

With $\bm{\theta} = \{\bm{\vartheta}, \sigma\}$ and $\mathbf{H}_{\bm{\theta}} = k(\bm{x}, \bm{x}; \bm{\vartheta}) + \sigma^2 \mathbf{I}$,
the marginal likelihood $\mathcal{L}$ as a function of $\bm{\theta}$ and its gradient $\nabla \mathcal{L}$ with respect to $\bm{\theta}$ can be expressed as
\begin{align}
    \label{eq:mll}
    \mathcal{L}(\bm{\theta}) &= -\frac{1}{2} \bm{y}^{\mathsf{T}} \mathbf{H}_{\bm{\theta}}^{-1} \bm{y} - \frac{1}{2} \log \det \mathbf{H}_{\bm{\theta}} - \frac{n}{2} \log 2 \pi,\\
    \label{eq:mll_grad}
    \nabla \mathcal{L}(\bm{\theta}) &= \frac{1}{2} (\mathbf{H}_{\bm{\theta}}^{-1} \bm{y})^{\mathsf{T}} \frac{\partial \mathbf{H}_{\bm{\theta}}}{\partial \bm{\theta}} \mathbf{H}_{\bm{\theta}}^{-1} \bm{y} - \frac{1}{2} \mathrm{tr} \left( \mathbf{H}_{\bm{\theta}}^{-1} \frac{\partial \mathbf{H}_{\bm{\theta}}}{\partial \bm{\theta}} \right),
\end{align}
where the partial derivative of $\mathbf{H}_{\bm{\theta}}$ with respect to each element in $\bm{\theta}$ is a $n \times n$ matrix.
If $n$ is small enough such that a Cholesky factorisation of $\mathbf{H}_{\bm{\theta}}$ is tractable then both $\mathcal{L}$ and $\nabla \mathcal{L}$ can be easily evaluated and used by any optimiser of choice to maximise $\mathcal{L}$.
However, we are considering the case where $n$ is too large to compute the Cholesky factorisation of $\mathbf{H}_{\bm{\theta}}$.

\subsection{Marginal Likelihood Optimisation for Iterative Gaussian Processes}
\label{sec:marginal_likelihood_iterative}
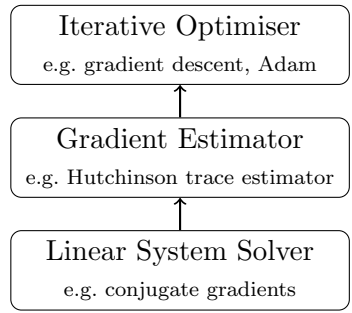
\begin{wrapfigure}{R}{0.4\textwidth}
\vspace{-0.15in}
\centering
\begin{tikzpicture}[
  node/.style={rectangle, rounded corners, minimum width=4.5cm, minimum height=1cm, draw=black},
  arrow/.style={thick,<-},
  node distance=1.5cm
]

\node (optimiser) [node, align=center] {Iterative Optimiser\\\scriptsize{e.g.\ gradient descent, Adam}};
\node (gradient) [node, below of=optimiser, align=center] {Gradient Estimator\\\scriptsize{e.g.\ Hutchinson trace estimator}};
\node (solver) [node, below of=gradient, align=center] {Linear System Solver\\\scriptsize{e.g.\ conjugate gradients}};

\draw [arrow] (optimiser) -- (gradient);
\draw [arrow] (gradient) -- (solver);

\end{tikzpicture}
\vspace{-0.3cm}
\caption{Marginal likelihood optimisation framework for iterative Gaussian processes.}
\label{fig:diagram}
\vspace{-0.5in}
\end{wrapfigure}
Marginal likelihood optimisation in iterative Gaussian processes can be structured into a three-level hierarchy (see Figure~\ref{fig:diagram}), as follows.

\paragraph{Iterative Optimiser}
Typically, a first-order optimiser, such as Adam \citep{Adam}, is used to maximise $\mathcal{L}$, which only requires estimates of $\nabla \mathcal{L}$, avoiding the evaluation of $\mathcal{L}$ and $\log \det \mathbf{H}_{\bm{\theta}}$. This allows us to focus on tractable estimates of $\nabla \mathcal{L}$.

\paragraph{Gradient Estimator}
The gradient $\nabla \mathcal{L}$ \eqref{eq:mll_grad} involves two computationally expensive components: inverse matrix-vector products of the form $\bm{v} = \mathbf{H}_{\bm{\theta}}^{-1} \bm{b}$ and the trace term. The inverse matrix-vector products are readily approximated using iterative solvers to linear systems of the form $\mathbf{H}_{\bm{\theta}} \bm{v} = \bm{b}$. The trace term can also be reduced into inverse matrix-vector products using stochastic trace estimation, e.g.\ Hutchinson's \citep{Hutchinson1990}, as follows
\begin{align}
    \label{eq:trace}
    \mathrm{tr} \left( \mathbf{H}_{\bm{\theta}}^{-1} \frac{\partial \mathbf{H}_{\bm{\theta}}}{\partial \bm{\theta}} \right)
    &= \mathbb{E}_{\bm{z}} \left[ \bm{z}^{\mathsf{T}} \mathbf{H}_{\bm{\theta}}^{-1} \frac{\partial \mathbf{H}_{\bm{\theta}}}{\partial \bm{\theta}} \bm{z} \right]
    \approx \frac{1}{s} \sum_{j=1}^s \bm{z}_j^{\mathsf{T}} \mathbf{H}_{\bm{\theta}}^{-1} \frac{\partial \mathbf{H}_{\bm{\theta}}}{\partial \bm{\theta}} \bm{z}_j,
\end{align}
where $s$ probe vectors $\bm{z}_j$ of length $n$ are introduced and $\forall j: \mathbb{E} [\bm{z}_j \bm{z}_j^{\mathsf{T}}] = \mathbf{I}$ is required for the estimator to be unbiased.
Common choices for the distribution of $\bm{z}_j$ are standard Gaussian, $\bm{z}_j \sim \mathcal{N}(\bm{0}, \mathbf{I})$, or Rademacher, namely uniform random signs, $\bm{z}_j \sim \mathcal{U}(\{ 1, -1\})^n$.
In theory, the latter exhibit lower estimator variance.
Additionally, more advanced trace estimators have also been developed \citep{Hutch++, XTrace}.
However, in practice, standard Gaussian probes with Hutchinson's trace estimator seem to work well.

\paragraph{Linear System Solver}
After substituting the trace estimator into \eqref{eq:mll_grad}, the approximate gradient consists of terms that involve computing $\bm{v}_{\bm{y}} = \mathbf{H}_{\bm{\theta}}^{-1} \bm{y}$ and $\bm{v}_j = \mathbf{H}_{\bm{\theta}}^{-1} \bm{z}_j$ by solving large systems of linear equations,
\begin{align}
    \mathbf{H}_{\bm{\theta}} \, \left[ \, \bm{v}_{\bm{y}}, \bm{v}_1, \dots, \bm{v}_s \, \right]
    = \left[ \, \bm{y}, \bm{z}_1, \dots, \bm{z}_s \, \right],
\end{align}
which share the same coefficient matrix $\mathbf{H}_{\bm{\theta}}$.
Because $\mathbf{H}_{\bm{\theta}}$ is positive-definite, the solution $\bm{v} = \mathbf{H}_{\bm{\theta}}^{-1} \bm{b}$ to the system of linear equations $\mathbf{H}_{\bm{\theta}} \, \bm{v} = \bm{b}$ can also be obtained by finding the unique minimiser of the corresponding convex quadratic objective,
\begin{align}
    \label{eq:quadratic}
    \bm{v} = \underset{\bm{u}}{\arg \min} \;\; \frac{1}{2} \bm{u}^{\mathsf{T}} \mathbf{H}_{\bm{\theta}} \, \bm{u} - \bm{u}^{\mathsf{T}} \bm{b},
\end{align}
facilitating the use of iterative optimisers.
In the context of Gaussian processes, conjugate gradients \citep{gardner18,WangPGT2019exactgp,wilson20,wilson21}, alternating projections \citep{wu2024largescale} and stochastic gradient descent \citep{lin2023sampling,lin2024stochastic} have been applied to optimise \eqref{eq:quadratic}, serving as linear system solvers.

Notably, the linear system solver dominates the overall computational costs, such that reducing its runtime translates to substantial computational savings (see Figure~\ref{fig:train_solver_time}).
Therefore, we propose to amortise computations by reusing solutions of linear systems to initialise the linear system solver in the next marginal likelihood step, providing a \textit{warm start}.
\begin{figure}[t]
    \centering
    \includegraphics{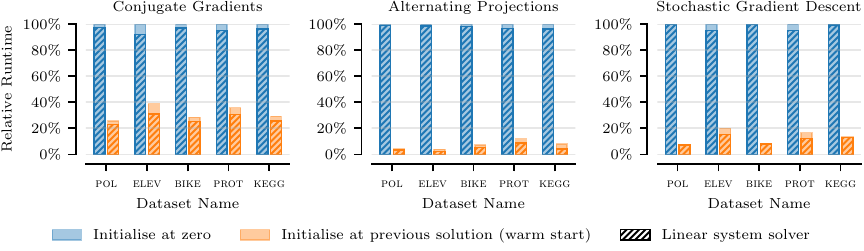}
    \vspace{-0.3cm}
    \caption{Comparison of relative runtimes for different linear system solvers. The solver (striped areas) dominates the total training time (coloured patches). Initialising at the previous solution (warm start) significantly reduces the runtime of the linear system solver, with varying effectiveness among different solvers and datasets.}
    \label{fig:train_solver_time}
    \vspace{-0.5cm}
\end{figure}
\section{Warm Start Marginal Likelihood Optimisation}
\label{sec:warm_start}
Given that the iterative linear system solver dominates the computational costs of marginal likelihood optimisation (see Figure~\ref{fig:train_solver_time}), reducing the number of necessary solver iterations until convergence will translate to substantial computational savings. However, iterative solvers are typically initialised at zero for each gradient computation step, even though the hyperparameters do not change much between steps.\footnote{Notable exceptions are \cite{artemev2021tighter}, who warm start $\bm{v}_{\bm{y}}$ in a sparse lower bound on $\mathcal{L}$, and \cite{Antoran22sampling}, who warm start a stochastic gradient descent solver for generalised linear models.}
Therefore, we propose to amortise computational costs for any solver type by reusing solutions of previous linear systems to \textit{warm start} (i.e.\ initialise) linear system solvers in the subsequent step.

At iterations $t$ and $t + 1$ of the marginal likelihood optimiser, associated with $\bm{\theta}^{(t)}$ and $\bm{\theta}^{(t + 1)}$, the linear system solver must solve two batches of linear systems, namely
\begin{align}
    \mathbf{H}_{\bm{\theta}}^{(t)}
    \left[ \, \bm{v}_{\bm{y}}^{(t)}, \bm{v}_1^{(t)}, \dots, \bm{v}_s^{(t)} \, \right]
    &= \left[ \, \bm{y}, \bm{z}_1^{(t)}, \dots, \bm{z}_s^{(t)} \, \right]
    \quad
    \text{and} \\
    \mathbf{H}_{\bm{\theta}}^{(t + 1)}
    \left[ \, \bm{v}_{\bm{y}}^{(t + 1)}, \bm{v}_1^{(t + 1)}, \dots, \bm{v}_s^{(t + 1)} \, \right]
    &= \left[ \, \bm{y}, \bm{z}_1^{(t + 1)}, \dots, \bm{z}_s^{(t + 1)} \, \right],
\end{align}
where $\mathbf{H}_{\bm{\theta}}^{(t)}$ and $\mathbf{H}_{\bm{\theta}}^{(t + 1)}$ are related through the change from $\bm{\theta}^{(t)}$ to $\bm{\theta}^{(t + 1)}$ and $\bm{v}_{\bm{y}}^{(t)}$ and $\bm{v}_{\bm{y}}^{(t + 1)}$ are further related through sharing the same right-hand side $\bm{y}$ in the linear system.
In such a setting, where the coefficient matrix only changes slightly and the right-hand side remains fixed, we can approximate $\bm{v}^{(t + 1)}$ using a first-order Taylor expansion of $\mathbf{H}_{\bm{\theta}}^{(t + 1)}$,
\begin{align}
    \left(\mathbf{H}_{\bm{\theta}}^{(t+1)}\right)^{-1}
    &\approx \left( \mathbf{H}_{\bm{\theta}}^{(t)} \right)^{-1} - \left( \mathbf{H}_{\bm{\theta}}^{(t)} \right)^{-1} \left( \mathbf{H}_{\bm{\theta}}^{(t+1)} - \mathbf{H}_{\bm{\theta}}^{(t)} \right) \left( \mathbf{H}_{\bm{\theta}}^{(t)} \right)^{-1}, \\
    \bm{v}^{(t + 1)} &\approx \bm{v}^{(t)} - \left( \mathbf{H}_{\bm{\theta}}^{(t)} \right)^{-1} \left( \mathbf{H}_{\bm{\theta}}^{(t+1)} - \mathbf{H}_{\bm{\theta}}^{(t)} \right) \, \bm{v}^{(t)}.
\end{align}
If $\Delta = \mathbf{H}_{\bm{\theta}}^{(t+1)} - \mathbf{H}_{\bm{\theta}}^{(t)}$ is small then $\bm{v}^{(t)}$ will be close to $\bm{v}^{(t + 1)}$ (see Figure~\ref{fig:2d_cross_section}), such that we can reuse $\bm{v}^{(t)}$ to initialise the linear system solver when solving for $\bm{v}^{(t + 1)}$.
To satisfy the condition of fixed right-hand sides, we propose to set $\bm{z}_j^{(t)} = \bm{z}_j$ at the cost of introducing some bias throughout optimisation, which can be bounded, as we will now quantify.
\begin{theorem}
    \label{thm:bound-on-biased-optimum}
    Let $\mathcal{L}$ and $\nabla \mathcal{L}$ be the marginal likelihood and its gradient as defined in \eqref{eq:mll} and \eqref{eq:mll_grad} respectively, and let $\tilde{\bm{g}}$ be an approximation to the gradient $\nabla \mathcal{L}$ where the trace is approximated with $s$ fixed samples as in \eqref{eq:trace}.
    Assume that the hyperparameter optimisation domain $\Theta$ is convex, closed and bounded, and that $\tilde{\bm{g}}: \Theta \to \R$ is a conservative field.
    Then, given a sufficiently large number of samples $s$, the hyperparameters $\tilde{\bm{\theta}}^*$ obtained by maximising the objective implied by the approximate gradients $\tilde{\bm{g}}$ will be $\gamma$-close in terms of the true objective $\mathcal{L}$ to the true maximum $\bm{\theta}^*$ of the objective $\mathcal{L}$,
    \[
        \mathcal{L}(\tilde{\bm{\theta}}^*) \geq \mathcal{L}(\bm{\theta}^*) - \gamma,
    \]
    with probability at least $1-\delta$.
\end{theorem}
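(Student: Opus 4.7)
The goal is to translate closeness of the approximate gradient field $\tilde{\bm{g}}$ to the true gradient $\nabla \mathcal{L}$ into closeness of the associated optima in objective value. Since $\tilde{\bm{g}}$ is assumed conservative, it is the gradient of some potential $\tilde{\mathcal{L}}: \Theta \to \R$, defined up to an additive constant. The strategy is the standard uniform-convergence argument: (i) show that $\tilde{\bm{g}}(\bm{\theta})$ concentrates around $\nabla \mathcal{L}(\bm{\theta})$ pointwise with high probability once $s$ is large, (ii) upgrade this to a uniform bound over the compact set $\Theta$, (iii) integrate along paths to convert the gradient bound into a bound on $\sup_{\bm{\theta} \in \Theta} |\tilde{\mathcal{L}}(\bm{\theta}) - \mathcal{L}(\bm{\theta}) - c|$ for a suitable constant $c$, and (iv) conclude via the usual two-step inequality at the two optima.

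\textbf{Pointwise concentration.} First I would fix $\bm{\theta} \in \Theta$ and observe that each coordinate of $\tilde{\bm{g}}(\bm{\theta}) - \nabla \mathcal{L}(\bm{\theta})$ is an empirical average of $s$ i.i.d.\ centred quadratic forms $\bm{z}_j^{\mathsf{T}} \mathbf{H}_{\bm{\theta}}^{-1} (\partial \mathbf{H}_{\bm{\theta}} / \partial \theta_k) \bm{z}_j$ minus its expectation. For standard Gaussian or Rademacher probes these satisfy a Hanson--Wright type concentration inequality, so for any $\eta > 0$ the deviation exceeds $\eta$ with probability at most $2 \exp(-c \, s \, \eta^2 / M^2)$, where $M$ uniformly bounds the relevant operator norms on $\Theta$. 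Compactness of $\Theta$ together with continuity of the kernel in the hyperparameters ensures such an $M$ exists.

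\textbf{Uniform bound via covering.} Next I would extend the pointwise concentration to a uniform bound over $\Theta$. Because the entries of $\mathbf{H}_{\bm{\theta}}^{-1}$ and $\partial \mathbf{H}_{\bm{\theta}} / \partial \bm{\theta}$ are smooth in $\bm{\theta}$ on the compact domain $\Theta$, each realisation $\bm{\theta} \mapsto \bm{z}_j^{\mathsf{T}} \mathbf{H}_{\bm{\theta}}^{-1} (\partial \mathbf{H}_{\bm{\theta}} / \partial \bm{\theta}) \bm{z}_j$ is Lipschitz with a constant controlled by $\|\bm{z}_j\|^2$ (and in expectation over $\bm{z}_j$). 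Take a minimal $\epsilon$-net of $\Theta$ of size $N(\epsilon)$, apply the pointwise bound with a union bound over the net, and use the Lipschitz property to interpolate between net points. Choosing $\epsilon \sim \eta$ and $s$ large enough so that $s \gtrsim M^2 \eta^{-2} (\log N(\eta) + \log(1/\delta))$ yields $\sup_{\bm{\theta} \in \Theta} \| \tilde{\bm{g}}(\bm{\theta}) - \nabla \mathcal{L}(\bm{\theta}) \| \leq \eta$ with probability at least $1 - \delta$.

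\textbf{From gradient closeness to value closeness.} Since both $\tilde{\bm{g}}$ and $\nabla \mathcal{L}$ are conservative on the convex set $\Theta$, integrating $\tilde{\bm{g}} - \nabla \mathcal{L}$ along the straight line from a fixed base point $\bm{\theta}_0$ to an arbitrary $\bm{\theta} \in \Theta$ gives
\begin{equation*}
    \bigl| \tilde{\mathcal{L}}(\bm{\theta}) - \mathcal{L}(\bm{\theta}) - c \bigr| \leq \eta \cdot \mathrm{diam}(\Theta),
\end{equation*}
where $c = \tilde{\mathcal{L}}(\bm{\theta}_0) - \mathcal{L}(\bm{\theta}_0)$. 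Denote the right-hand side by $\gamma/2$. Using $\tilde{\mathcal{L}}(\tilde{\bm{\theta}}^*) \geq \tilde{\mathcal{L}}(\bm{\theta}^*)$ (since $\tilde{\bm{\theta}}^*$ is the maximiser of $\tilde{\mathcal{L}}$) and the uniform closeness, the additive constant $c$ cancels in the chain
\begin{equation*}
    \mathcal{L}(\tilde{\bm{\theta}}^*) \geq \tilde{\mathcal{L}}(\tilde{\bm{\theta}}^*) - c - \gamma/2 \geq \tilde{\mathcal{L}}(\bm{\theta}^*) - c - \gamma/2 \geq \mathcal{L}(\bm{\theta}^*) - \gamma,
\end{equation*}
giving the stated bound.

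\textbf{Main obstacle.} The core technical step is the uniform concentration over $\Theta$: pointwise Hanson--Wright is standard, but controlling the Lipschitz constants of the random quadratic forms uniformly in $\bm{\theta}$ requires the boundedness of $\|\mathbf{H}_{\bm{\theta}}^{-1}\|$ and of the derivatives of $\mathbf{H}_{\bm{\theta}}$ over $\Theta$ (which follow from compactness and the assumption that $\sigma^2$ is bounded away from zero on $\Theta$), and quantifying $N(\epsilon)$ for $\Theta$. The conservative-field hypothesis is essential here since otherwise $\tilde{\bm{g}}$ need not admit a potential and the line-integral argument would fail.
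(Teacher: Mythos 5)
Your proposal is correct in outline and shares the paper's overall architecture: a uniform bound on $\|\tilde{\bm{g}} - \nabla\mathcal{L}\|$ over $\Theta$, converted into closeness of the potentials by integrating along straight lines in the convex domain, followed by the two-point inequality at the two maximisers (your steps (iii)--(iv) match the paper's argument essentially verbatim, up to a factor of $2$ in how $\gamma$ relates to the gradient tolerance). Where you genuinely diverge is in how the uniform gradient bound is obtained. The paper exploits the fact that the probes are fixed: it writes $\tilde{g}_k(\bm{\theta}) - g_k(\bm{\theta}) = \sum_i \lambda_i(\bm{\theta})\, \bm{q}_i(\bm{\theta})^{\mathsf{T}} \mathbf{M}\, \bm{p}_i(\bm{\theta})$ with $\mathbf{M} = \frac{1}{s}\sum_j \bm{z}_j\bm{z}_j^{\mathsf{T}} - \mathbf{I}$ a single random matrix that does not depend on $\bm{\theta}$, so $|\tilde{g}_k(\bm{\theta}) - g_k(\bm{\theta})| \leq n\lambda^{\mathrm{max}}\opnorm{\mathbf{M}}$ holds simultaneously for all $\bm{\theta}$ once $\opnorm{\mathbf{M}}$ is controlled; that operator norm is then bounded via an $\epsilon$-net on the data-space sphere $\mathcal{S}^{n-1}$, an exact second-moment identity $\mathbb{E}\|\mathbf{M}\bm{c}\|^2 = (\mathbb{E}[z^4]+n-2)/s$, and Markov's inequality. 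No covering of $\Theta$ and no smoothness in $\bm{\theta}$ is required, only the eigenvalue bound $\lambda^{\mathrm{max}}$. You instead use pointwise Hanson--Wright concentration plus a covering of the hyperparameter domain $\Theta$ with Lipschitz interpolation. Both routes are sound, and each buys something: yours needs extra hypotheses (subgaussian probes, Lipschitz control of $\mathbf{H}_{\bm{\theta}}^{-1}\partial\mathbf{H}_{\bm{\theta}}/\partial\theta_k$ in $\bm{\theta}$, hence $\sigma^2$ bounded away from zero) but pays only $\log N(\eta) \sim d_\theta \log(1/\eta)$ for uniformity and yields a sample complexity roughly polynomial in $n$; the paper's route needs only finite fourth moments and pairwise-independent coordinates of the probes, but because it pairs a Markov ($1/s$) tail with a union bound over the net of $\mathcal{S}^{n-1}$, its stated sample complexity carries the factor $(1+\frac{2}{\epsilon})^n$, exponential in the number of data points. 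One caution on your side: in the subgaussian regime of Hanson--Wright the relevant scale is the Frobenius norm of $\mathbf{H}_{\bm{\theta}}^{-1}\partial_k\mathbf{H}_{\bm{\theta}}$, which can be $\sqrt{n}$ times its operator norm, so your uniform constant $M$ must be defined with respect to that quantity for the claimed rate to hold.
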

See \Cref{apd:bound-on-opt-derivation} for details.
In practice, a small number of samples seems to be sufficient.
\begin{figure}[t]
    \centering
    \includegraphics{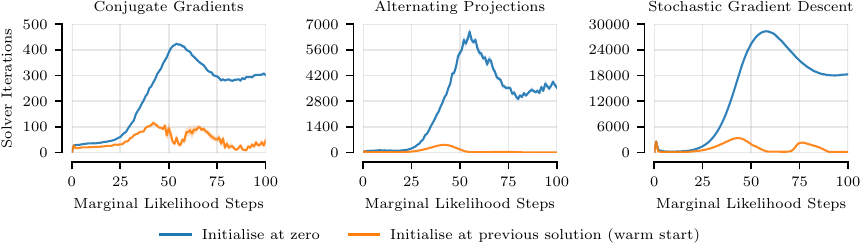}
    \vspace{-0.3cm}
    \caption{Evolution of the required number of linear system solver iterations at each step of marginal likelihood optimisation on the \textsc{pol} dataset. Initialising at the solution of the previous step (warm start) reduces the number of required solver iterations with varying effectiveness among different solvers.}
    \label{fig:solver_iterations_pol}
    \vspace{-0.7cm}
\end{figure}
\vspace{-0.3cm}
\section{Experiments}
\label{sec:experiments}
To investigate the effectiveness of warm starts, we performed marginal likelihood optimisation on five UCI regression datasets \citep{Dua2019UCI}, comparing the procedure described in \Cref{sec:marginal_likelihood_iterative} with resampled probe vectors versus fixed probe vectors and warm starts.
In particular, we used the Matérn-\nicefrac{3}{2} kernel with length scales per input dimension and a scalar signal scale.
Observation noise, signal scale and length scales were initialised at $1.0$ and jointly optimised by performing 100 steps of Adam \citep{Adam} with a learning rate of 0.1, where the gradient was estimated using \eqref{eq:trace} with $s = 16$ standard Gaussian probe vectors $\bm{z}_j \sim \mathcal{N}(\bm{0}, \mathbf{I})$.
We conducted experiments with different linear system solvers, namely conjugate gradients \citep{gardner18,WangPGT2019exactgp}, alternating projections \citep{wu2024largescale} and stochastic gradient descent \citep{lin2023sampling,lin2024stochastic}.
See \Cref{apd:implementation_details} for further implementation details.

\Cref{fig:train_solver_time} illustrates that warm starts significantly reduce the runtime of linear system solvers, and consequently the total runtime, being most effective for alternating projections.
\Cref{fig:solver_iterations_pol} visualises that these speed-ups are due to substantial decreases in the number of linear system solver iterations required to reach a specified tolerance.
\Cref{tab:results_main} reports the final test log-likelihoods, computed using Cholesky factorisation, and total runtimes after 100 steps of marginal likelihood optimisation.
Warm starts achieve the same test performance while providing an average speed-ups among datasets from $3.2 \times$ to $16.7 \times$, showing that fixing probe vectors and reusing solutions does not impact performance in practice.
\Cref{fig:hyperparams_trace_main} shows that optimisation traces obtained using warm starts are almost identical to the traces obtained by resampling probe vectors and reinitialising at zero, and exact gradient computation using Cholesky factorisation.
See \Cref{apd:results} for more experimental results.
\begin{table}[!t]
\caption{Predictive test log-likelihoods and total runtimes after marginal likelihood optimisation, and average speed-up among datasets due to warm start for different linear system solvers, namely conjugate gradients (CG), alternating projections (AP), and stochastic gradient descent (SGD) (mean over 10 dataset splits).}
\label{tab:results_main}
\centering
\small
\setlength{\tabcolsep}{5pt}
\begin{tabular}{l@{\hskip 17pt}c c c c c@{\hskip 17pt}c c c c c@{\hskip 17pt}c}
\toprule
& \multicolumn{5}{c}{Test Log-Likelihood} & \multicolumn{5}{c}{Total Runtime (min)} & Average \\
& \textsc{pol} & \textsc{elev} & \textsc{bike} & \textsc{prot} & \textsc{kegg} & \textsc{pol} & \textsc{elev} & \textsc{bike} & \textsc{prot} & \textsc{kegg} & Speed-Up \\
\midrule
CG & 1.27 & -0.39 & 2.15 & -0.59 & 1.08 & 7.86 & 2.76 & 7.69 & 31.44 & 64.29 & --- \\
\footnotesize{+ ws} & 1.27 & -0.39 & 2.15 & -0.59 & 1.08 & 2.00 & 1.07 & 2.18 & 11.27 & 18.81 & 3.2 $\times$ \\
\midrule
AP & 1.27 & -0.39 & 2.15 & -0.59 & 1.08 & 22.39 & 13.55 & 12.31 & 45.42 & 62.24 & --- \\
\footnotesize{+ ws} & 1.27 & -0.39 & 2.15 & -0.59 & 1.08 & 0.99 & 0.52 & 0.90 & 5.52 & 4.86 & 16.7 $\times$ \\
\midrule
SGD & 1.27 & -0.39 & 2.18 & -0.59 & 1.08 & 41.31 & 4.92 & 81.84 & 46.92 & 360.44 & --- \\
\footnotesize{+ ws} & 1.27 & -0.39 & 2.15 & -0.59 & 1.07 & 3.08 & 0.98 & 6.73 & 7.87 & 48.72 & 8.8 $\times$ \\
\bottomrule
\end{tabular}
\end{table}
\begin{figure}[!t]
    \centering
    \includegraphics{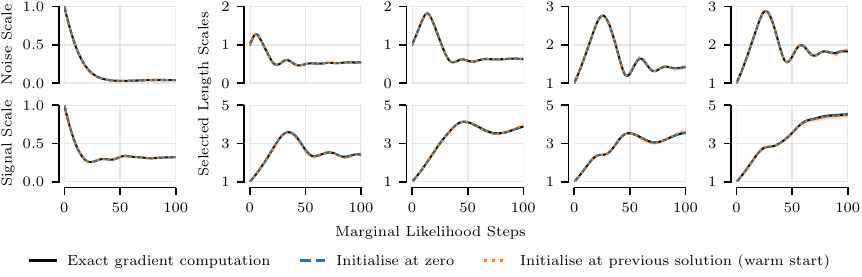}
    \vspace{-0.3cm}
    \caption{Evolution of hyperparameters during marginal likelihood optimisation on the \textsc{pol} dataset using conjugate gradients as linear system solver. The behaviour of exact gradient computation using Cholesky factorisation is obtained when initialising at zero or at the previous solution. The latter does not degrade performance.}
    \label{fig:hyperparams_trace_main}
    \vspace{-0.7cm}
\end{figure}
\section{Conclusion}
\label{sec:conclusion}
We discussed marginal likelihood optimisation for iterative Gaussian processes and proposed warm starts to amortise linear system solver computation.
We analysed the consequences of warm starts theoretically, and investigated their behaviour during hyperparameter optimisation on regression tasks empirically.
Our experiments demonstrated that warm starts provide substantial reductions in computational costs, while maintaining predictive performance and matching optimisation traces of exact gradient computation using Cholesky factorisation.

\clearpage
\onecolumn
\section*{Acknowledgments}
Jihao Andreas Lin and Shreyas Padhy were supported by the University of Cambridge Harding Distinguished Postgraduate Scholars Programme.
Jos\'e Miguel Hern\'andez-Lobato acknowledges support from a Turing AI Fellowship under grant EP/V023756/1.
We thank Javier Antor\'an and Runa Eschenhagen for helpful discussions.
This work was performed using resources provided by the Cambridge Service for Data Driven Discovery (CSD3) operated by the University of Cambridge Research Computing Service (www.csd3.cam.ac.uk), provided by Dell EMC and Intel using Tier-2 funding from the Engineering and Physical Sciences Research Council (capital grant EP/T022159/1), and DiRAC funding from the Science and Technology Facilities Council (www.dirac.ac.uk).
\bibliography{references}

\newpage

\appendix
\section{Mathematical Derivations}
\label{apd:bound-on-opt-derivation}
Throughout this appendix, we will denote the number of data examples as $n$, such that $\mathbf{\mathbf{H}}_{\bm{\theta}} \in \R^{n\times n}$, and the number of samples in the trace estimator in \eqref{eq:trace} with $s$.
We will denote the optimisation domain for the hyperparameters as $\Theta$, where we assume $\Theta\subseteq \R^{d_\theta}$.
We will also assume that all elements in $\bm{z}_j$ have finite fourth moments $\mathbb{E} \; z^4$. For standard normal $\bm{z}_j$, we have $\mathbb{E} \; z^4 = 3$, and for Rademacher $\bm{z}_j$ we have $\mathbb{E} \; z^4 = 1$.
Furthermore, we assume that the coordinates of $\bm{z}_j$ are also pairwise independent, which will again be the case for Gaussian or Rademacher random variables.

\begin{theorem}\label{thm:bounded-gradient-error}
    Let $\bm{g} = \nabla \mathcal{L}$, as in \eqref{eq:mll_grad}, and let $\tilde{\bm{g}}: \Theta \to \R^{d_\theta}$ be an approximation to $\bm{g}$ with $s$ samples, as in \eqref{eq:trace}. Assume that the absolute value of the eigenvalues of $\mathbf{\mathbf{H}}^{-1}_{\bm{\theta}},\frac{\partial \mathbf{\mathbf{H}}_{\bm{\theta}}}{\partial \theta_k}$ are upper-bounded on the domain of $\theta_k$ by $\lambda_{\mathbf{\mathbf{H}}^{-1}}^\mathrm{max}$ and $\lambda_{\partial \mathbf{\mathbf{H}}}^\mathrm{max}$ such that the eigenvalues of their product are upper-bounded by $\lambda^\mathrm{max} = \lambda_{\mathbf{\mathbf{H}}^{-1}}^\mathrm{max} \lambda_{\partial \mathbf{\mathbf{H}}}^\mathrm{max}$. Then, for any $\beta, \delta > 0$, if the number of samples is sufficiently large, we have
    \begin{align}
        \mathbb{P}\left[ \left|\tilde{g}_k(\theta) - \frac{\partial}{\partial \theta_k} \mathcal{L}(\theta) \right| > \beta \right] < \delta
        \qquad \text{if} \qquad
        s > \left(1+\frac{2}{\epsilon}\right)^n \frac{\mathbb{E}\left[z^4\right]+n - 2}{\delta \beta^2 (1-\epsilon)^2} n \lambda^{\mathrm{max}},
    \end{align}
    i.e.\ the $j$-th component of the approximate gradient $\tilde{\bm{g}}(\theta)$ will be within distance $\beta$ of the true gradient on the entire optimisation space $\Theta$ with probability at least $(1-\delta)$ for any $\epsilon > 0$.
\end{theorem}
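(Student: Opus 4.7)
The plan is to reduce the error $|\tilde{g}_k(\bm{\theta}) - g_k(\bm{\theta})|$ to the concentration of Hutchinson's trace estimator applied to the matrix $\mathbf{A}_{\bm{\theta}} := \mathbf{H}_{\bm{\theta}}^{-1}\frac{\partial \mathbf{H}_{\bm{\theta}}}{\partial \theta_k}$. Inspecting \eqref{eq:mll_grad}, the only random component of $\tilde{g}_k$ comes from the trace approximation in \eqref{eq:trace}, so the error can be written as $\tilde{g}_k(\bm{\theta}) - g_k(\bm{\theta}) = \tfrac{1}{2}\langle \mathbf{A}_{\bm{\theta}},\, \mathbf{M}\rangle$, where $\mathbf{M} := \tfrac{1}{s}\sum_j \bm{z}_j \bm{z}_j^{\mathsf{T}} - \mathbf{I}$ is a $\bm{\theta}$-independent random matrix. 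The task therefore reduces to bounding $\sup_{\bm{\theta} \in \Theta}|\langle \mathbf{A}_{\bm{\theta}}, \mathbf{M}\rangle|$ with high probability.

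The first step is a pointwise-in-$\bm{\theta}$ variance bound. Under the stated hypotheses on $\bm{z}_j$ (zero mean, unit variance, pairwise independent entries, fourth moment $\mathbb{E}[z^4]$), direct expansion of $\mathbb{E}[(\bm{z}^{\mathsf{T}} \mathbf{A} \bm{z})^2]$ yields the textbook identity $\mathrm{Var}(\bm{z}^{\mathsf{T}} \mathbf{A} \bm{z}) = (\mathbb{E}[z^4] - 3)\sum_i \mathbf{A}_{ii}^2 + \|\mathbf{A}\|_F^2 + \mathrm{tr}(\mathbf{A}^2)$. The assumed bound $|\lambda_i(\mathbf{A}_{\bm{\theta}})| \leq \lambda^{\max}$ controls each of these three terms by a multiple of $n(\lambda^{\max})^2$, so averaging over $s$ independent samples and applying Chebyshev produces a pointwise bound of the form $\mathbb{P}[|\tilde{g}_k(\bm{\theta}) - g_k(\bm{\theta})| > \beta] \lesssim \tfrac{n\lambda^{\max}(\mathbb{E}[z^4] + n - 2)}{s\beta^2}$.

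The second step is to lift this to a uniform bound over $\bm{\theta} \in \Theta$ via a covering argument. The factor $(1+2/\epsilon)^n$ appearing in the conclusion is precisely the standard cardinality upper bound for an $\epsilon$-net of the unit sphere $S^{n-1} \subset \mathbb{R}^n$; combined with the squared net-to-supremum loss factor $(1-\epsilon)^{-2}$, this strongly suggests discretising both the left and right singular directions of $\mathbf{A}_{\bm{\theta}}$ rather than discretising $\Theta$. Concretely, I would express the bilinear form $\langle \mathbf{A}_{\bm{\theta}}, \mathbf{M}\rangle$ through a rank-one decomposition of $\mathbf{A}_{\bm{\theta}}$, union-bound the pointwise Chebyshev estimate over a product of two $\epsilon$-nets of $S^{n-1}$, and absorb the discretisation error using the eigenvalue bounds to recover the stated threshold on $s$.

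The main obstacle is the uniformisation step. The variance computation and Chebyshev application are routine, but passing from pointwise to uniform convergence has to be carried out on the codomain of $\bm{\theta} \mapsto \mathbf{A}_{\bm{\theta}}$ in matrix/sphere space (since the cover size depends on $n$, not on $\dim \Theta$). The delicate part is verifying that the eigenvalue bound $\lambda^{\max}$ alone is enough to control the net error with only a multiplicative $(1-\epsilon)^{-2}$ penalty, and that the constants in the variance bound collapse exactly into the stated factor $\mathbb{E}[z^4] + n - 2$ rather than a larger one; this constant bookkeeping is where I expect the proof to require the most attention.
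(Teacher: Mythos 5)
Your proposal follows essentially the same route as the paper's proof: the error is reduced to a $\bm{\theta}$-dependent contraction of the single $\bm{\theta}$-independent random matrix $\mathbf{M} = \frac{1}{s}\sum_j \bm{z}_j\bm{z}_j^{\mathsf{T}} - \mathbf{I}$, whose operator norm is controlled by a second-moment Markov/Chebyshev bound union-bounded over an $\epsilon$-net of the unit sphere in $\R^n$, while the eigenvalue bound $n\lambda^{\mathrm{max}}$ absorbs $\mathbf{H}_{\bm{\theta}}^{-1}\frac{\partial \mathbf{H}_{\bm{\theta}}}{\partial\theta_k}$ deterministically and hence uniformly in $\bm{\theta}$. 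The one deviation is your product of two $\epsilon$-nets for the left and right directions, which would yield $(1+\frac{2}{\epsilon})^{2n}$; the paper instead uses a single net and applies Markov to $\|\mathbf{M}\bm{c}\|^2$ (which is where both the $(1-\epsilon)^{-2}$ and the exact constant $\mathbb{E}[z^4]+n-2 = s\,\mathbb{E}\bigl[\|\mathbf{M}\bm{c}\|^2\bigr]$ come from), recovering the stated threshold on $s$.
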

\begin{proof}
    Let $\sum_{i=1}^n \bm{q}_i({\bm{\theta}}) \lambda_i({\bm{\theta}}) \bm{p}_i({\bm{\theta}})^\mathsf{T}$ be the eigendecomposition of $\mathbf{H}_{\bm{\theta}}^{-1} \frac{\partial \mathbf{H}_{\bm{\theta}}}{\partial \theta_k}$, where $\{\bm{q}_i\}_{i=1}^n$ and $\{\bm{p}_i\}_{i=1}^n$ are two sets of orthonormal vectors.
    We will notationally suppress the dependence of $\bm{p}_i, \bm{q}_i, \lambda_i$ on ${\bm{\theta}}$ going forwards.
    First, we rewrite $\tilde{g}_k(\theta) - g_k(\theta)$,
    \begin{align}
        \label{eq:M-definition}
        \tilde{g}_k(\theta) - g_k(\theta) &= \sum_{j=1}^s \bm{z}_j^\mathsf{T} \mathbf{H}_{\bm{\theta}}^{-1} \frac{\partial \mathbf{H}_{\bm{\theta}}}{\partial \theta_k} \bm{z}_j -
        \E_{\bm{z}} \left[ \bm{z}^\mathsf{T} \mathbf{H}_{\bm{\theta}}^{-1} \frac{\partial \mathbf{H}_{\bm{\theta}}}{\partial \theta_k} \bm{z}\right], \\
        &= \sum_{j=1}^s \bm{z}_j^\mathsf{T} \left(\sum_{i=1}^n \lambda_i \bm{q}_i \bm{p}_i^\mathsf{T} \right) \bm{z}_j -
        \E_{\bm{z}}\left[ \bm{z}^\mathsf{T}\left(\sum_{i=1}^n \lambda_i \bm{q}_i \bm{p}_i^\mathsf{T} \right) \bm{z}\right], \\
        &= \sum_{i=1}^n \lambda_i \sum_{j=1}^s  \bm{z}_j^\mathsf{T} \bm{q}_i \bm{p}_i^\mathsf{T} \bm{z}_j -
        \sum_{i=1}^n \lambda_i \E_{\bm{z}}\left[ \bm{z}^\mathsf{T} \bm{q}_i \bm{p}_i^\mathsf{T} \bm{z}\right], \\
        &= \sum_{i=1}^n \lambda_i \left( \sum_{j=1}^s  \bm{z}_j^\mathsf{T} \bm{q}_i \bm{p}_i^\mathsf{T} \bm{z}_j -
        \E_{\bm{z}}\left[ \bm{z}^\mathsf{T} \bm{q}_i \bm{p}_i^\mathsf{T} \bm{z}\right]\right), \\
        &= \sum_{i=1}^n \lambda_i \left( \bm{q}_i^\mathsf{T} \left(\sum_{j=1}^s  \bm{z}_j \bm{z}_j^\mathsf{T}\right)  \bm{p}_i -
        \bm{q}_i^\mathsf{T} \underbrace{\E_{\bm{z}}\left[\bm{z} \bm{z}^\mathsf{T}  \right]}_{\mathbf{I}}\bm{p}_i  \right), \\
        &= \sum_{i=1}^n \lambda_i \bm{q}_i^\mathsf{T}\underbrace{\left(\left(\sum_{j=1}^s  \bm{z}_j \bm{z}_j^\mathsf{T}\right) - \mathbf{I} \right)}_{\mathbf{M}}\bm{p}_i.
    \end{align}
    Therefore, we can bound the norm of the difference as
    \begin{align}
        \label{eq:gradient-bound1}
        |\tilde{g}_k(\theta) - g_k(\theta)| &\leq  \sum_{i=1}^n |\lambda_i| \left| \bm{q}_i^\mathsf{T} \mathbf{M} \bm{p}_i \right| 
        , \\
        &\leq \sum_{i=1}^n |\lambda_i| \opnorm{\mathbf{M}} 
    \end{align}
    where $\opnorm{\mathbf{M}}$ is the operator (spectral) norm of $\mathbf{M}$.

    \begin{lemma}\label{lemma:eps-net-operator-norm-bound}
        $\mathbb{P}\left[\opnorm{\mathbf{M}} > \beta \right] < \sum_{\bm{c} \in \Sigma_\epsilon} \frac{\mathbb{E}\left[ \|\mathbf{M}\bm{c}\|^2\right]}{\beta^2 (1-\epsilon)^2}$, where $\Sigma_\epsilon$ is an $\epsilon$-net on an $\R^n$-sphere $\mathcal{S}^{n-1}$.
    \begin{proof}
    We turn the lower bound $\opnorm{\mathbf{M}} > \beta$ on the spectral norm into a lower bound on $\mathbf{M}\bm{c}$ for any $\bm{c} \in \mathcal{S}^{n-1}$ which is a close approximation to the largest-norm eigenvalue eigenvector.
        
    Consider the unit vector $\bm{u}^*$ such that  $\|\mathbf{M} \bm{u}^*\|=\opnorm{\mathbf{M}}$.
    Such a vector exists $\opnorm{\mathbf{M}} = \sup_{\bm{u} \in \gS^{n-1}} \bm{u}^\mathsf{T} \mathbf{M} \bm{u}$, because the supremum is taken over a compact subspace of $\R^n$.
    Note that if we have a unit vector $\bm{c}\in \R^n$ such that $\|\bm{c} - \bm{u}^*\|<\epsilon$, then
    \begin{align}
        \label{eq:spectral-norm-vector-approx}
        \|\mathbf{M}\bm{c}\|&\geq \|\mathbf{M}\bm{u}^*\| - \|\mathbf{M} \big(\bm{c}-\bm{u}^*\big)\|, && \color{gray}\triangle\ \text{triangle inequality} \\
        &=\opnorm{\mathbf{M}} - \|\mathbf{M} \big(\bm{c}-\bm{u}^*\big)\|, \\
        &\geq \opnorm{\mathbf{M}} - \opnorm{\mathbf{M}} \|\bm{c}-\bm{u}^*\|, \\
        &> \opnorm{\mathbf{M}} - \opnorm{\mathbf{M}}\epsilon = \opnorm{\mathbf{M}}(1-\epsilon).
    \end{align}

    Consider a finite $\epsilon$-net $\Sigma_\epsilon\subset \mathcal{S}^{n-1}$ on the unit sphere $\mathcal{S}^{n-1}$. There exists such a collection with cardinality at most $\left(1+\frac{2}{\epsilon}\right)^n$, and thus such a finite $\epsilon$-net exists for $\epsilon > 0$. By \eqref{eq:spectral-norm-vector-approx}, if $\opnorm{\mathbf{M}} > \beta$, then for some $\bm{c}\in\Sigma_\epsilon$ we have that $\|\mathbf{M}\bm{c}\|>\beta (1-\epsilon)$.
    Hence
    \begin{align}
        \mathbb{P}\left[\opnorm{\mathbf{M}} > \beta \right] &\leq  \mathbb{P}\left[ \cup_{\bm{c}_i \in \Sigma_\epsilon} \left[\|\mathbf{M}\bm{c}_i\|>\beta (1-\epsilon)\right]\right], \\
        &\leq \sum_{\bm{c}_i \in \Sigma_\epsilon} \mathbb{P}\left[ \|\mathbf{M}\bm{c}_i\|>\beta (1-\epsilon)\right], && \color{gray}\triangle\ \text{union bound}\\
        &< \sum_{\bm{c}_i \in \Sigma_\epsilon} \frac{\mathbb{E}\left[ \|\mathbf{M}\bm{c}_i\|^2\right]}{\beta^2 (1-\epsilon)^2}. && \color{gray}\triangle\ \text{Markov inequality}
    \end{align}

    \end{proof}
    \end{lemma}
    \begin{lemma}\label{lemma:expectation-of-covariance-approx}
        For pairwise independent zero-mean identity-covariance $\bm{z}_j$ with pairwise independent coordinates, and any unit vector $\bm{c}\in \R^n$
        \begin{align}
            \mathbb{E}\left[\left\| \left(\frac{1}{s}\sum_p \bm{z}_p \bm{z}_p^\mathsf{T} - \mathbf{I} \right) \bm{c}\right\|^2  \right] = \frac{\mathbb{E}[z^4] + n-2 }{s}.
        \end{align}
        \begin{proof}
            \begin{align}
                &\quad \left\| \left(\frac{1}{s}\sum_p \bm{z}_p \bm{z}_p^\mathsf{T} - \mathbf{I} \right) \bm{c}\right\|^2 \\
                &=\left\| \frac{1}{s}\sum_p \bm{z}_p \bm{z}_p^\mathsf{T} \bm{c} - \bm{c} \right\|^2  \\
                &=\left( \frac{1}{s}\sum_p \bm{z}_p \left(\bm{z}_p^\mathsf{T} \bm{c}\right)   - \bm{c}\right)^\mathsf{T} \left( \frac{1}{s}\sum_q  \bm{z}_q \left(\bm{z}_q^\mathsf{T} \bm{c}\right)  - \bm{c}\right) \\
                &= \left(\frac{1}{s^2}\sum_{p,q} \left(\bm{z}_p^\mathsf{T} \bm{c}\right) \bm{z}_p^\mathsf{T} \bm{z}_q \left(\bm{z}_q^\mathsf{T} \bm{c}\right)  \right)
                -   \left(\frac{1}{s} \sum_p\left(\bm{z}_p^\mathsf{T} \bm{c}\right)\left(\bm{z}_p^\mathsf{T} \bm{c}\right)\right)
                -   \left(\frac{1}{s} \sum_q\left(\bm{z}_q^\mathsf{T} \bm{c}\right)\left(\bm{z}_q^\mathsf{T} \bm{c}\right)\right)
                + \bm{c}^\mathsf{T} \bm{c} \\
                &= \left(\frac{1}{s^2}\sum_{p,q} \left(\bm{z}_p^\mathsf{T} \bm{c}\right) \left(\bm{z}_q^\mathsf{T} \bm{c}\right) \bm{z}_p^\mathsf{T} \bm{z}_q \right)
                -   \left(\frac{2}{s} \sum_p\left(\bm{z}_p^\mathsf{T} \bm{c}\right)^2\right) + \bm{c}^\mathsf{T} \bm{c} \\
                &= \left(\frac{1}{s^2}\sum_{p,q} \left(\sum_i z_{p,i}c_i \right) \left(\sum_j z_{q,j}c_j\right) \left(\sum_k z_{p,k}z_{q,k} \right)\right)
                -    \left(\frac{2}{s} \sum_p\left(\sum_i z_{p, i} c_i\right)^2\right) + \bm{c}^\mathsf{T} \bm{c} \\
                &= \left(\frac{1}{s^2}\sum_{p,q} \sum_{i,j,k} z_{p,i} z_{q,j}z_{p,k}z_{q,k}c_i c_j\right)
                -   \left(\frac{2}{s} \sum_p\left(\sum_i z_{p, i} c_i\right)\left(\sum_j z_{p, j} c_j\right)\right) + 1 \\
                &= \left(\frac{1}{s^2}\sum_{p,q} \sum_{i,j,k} z_{p,i} z_{q,j}z_{p,k}z_{q,k}c_i c_j\right)
                -   \left(\frac{2}{s} \sum_p\sum_{i, j} z_{p, j}z_{p, i} c_i c_j\right) + 1
            \end{align}
            Therefore, the expectation of the above can be simplified as
            \begin{align}
                &\quad \left(\frac{1}{s^2}\sum_{p,q,i,j,k} \mathbb{E}\left[z_{p,i} z_{q,j}z_{p,k}z_{q,k}\right]c_i c_j\right)
                - \left(\frac{2}{s} \sum_{p, i, j} \underbrace{\mathbb{E}\left[z_{p, j}z_{p, i}\right]}_{\text{$0$ unless $i=j$}} c_i c_j\right) + 1, \\
                &= \left(\frac{1}{s^2}\sum_{p,q,i,j,k} \mathbb{E}\left[z_{p,i} z_{q,j}z_{p,k}z_{q,k}\right]c_i c_j\right)
                - \left(\frac{2}{s} \underbrace{\sum_{p, i} \overbrace{\mathbb{E}\left[z_{p, i}^2\right]}^{\text{1}} c_i^2}_{\text{s}}\right) + 1, \\
                &=\left(\frac{1}{s^2}\sum_{p,q,i,j,k} \underbrace{\mathbb{E}\left[z_{p,i} z_{q,j}z_{p,k}z_{q,k}\right]}_{\parbox{3cm}{\footnotesize $0$ unless $i\!=\!j$, $p\!=\!q$ or $i\!=j\!=k$}}c_i c_j\right)
                -  2 + 1.
            \end{align}
            The expectation in the terms of the first sum will be $0$ if $i\neq j$: if $i\neq k$ then $\mathbb{E}\left[z_{p,i} z_{q,j}z_{p,k}z_{q,k}\right]=\cancelto{0}{\mathbb{E}\left[z_{p,i}\right]} \mathbb{E}\left[z_{q,j}z_{p,k}z_{q,k}\right]$, and if $k\!=\!i\!\neq\!j$ then $\mathbb{E}\left[z_{p,i} z_{q,j}z_{p,k}z_{q,k}\right]=\cancelto{0}{\mathbb{E}\left[z_{q,j}\right]} \mathbb{E}\left[z_{p,i}z_{p,k}z_{q,k}\right]$. Hence, for all non-zero terms $i=j$ and the first sum can be simplified as $\sum_{p,q,i,k}\mathbb{E}\left[z_{p,i} z_{q,i}z_{p,k}z_{q,k}\right]$.

      Then, we again have four cases for the terms $\mathbb{E}\left[z_{p,i} z_{q,i}z_{p,k}z_{q,k}\right]$:
      \begin{align}
        \mathbb{E}\left[z_{p,i} z_{q,i}z_{p,k}z_{q,k}\right]=\begin{cases}
          \mathbb{E}[z_{p,i}^4] & i=k , q=p \\
          \mathbb{E}[z_{p,i}^2]\mathbb{E}[z_{q,i}^2]=1 & i=k , q\neq p \\
          \mathbb{E}[z_{p,i}^2] \mathbb{E}[z_{p,k}^2]=1 & i\neq k , q=p \\
          \mathbb{E}[z_{p,i}]\mathbb{E}[z_{q,i}] \mathbb{E}[z_{p,k}]\mathbb{E}[z_{q,k}]=0 & i\neq k , q\neq p \\
        \end{cases},
      \end{align} 
      and so, separating the sum into these cases, we get
    \begin{align}
        &\quad \mathbb{E}\left[\left\| \frac{1}{s}\sum_p \bm{z}_p \bm{z}_p^\mathsf{T} \bm{c} - \bm{c} \right\|^2\right], \\
        &=\frac{1}{s^2}\left(\left(\sum_{p,i} \mathbb{E}\left[z_{p,i}^4\right]c_i^2\right) + \left(\sum_{p}\sum_{q\neq p}\sum_i c_i^2\right)+\left(\sum_{p}\sum_i\sum_{k\neq i} c_i^2\right)\right)
        - 1, \\ 
        &=\frac{1}{s^2}\left(\left(\mathbb{E}\left[z^4\right]\sum_{p,i} c_i^2\right) + s(s-1) + s (n - 1)\right)
        - 1, \\ 
        &=\frac{s\E\left[z^4\right]}{s^2} + \frac{s^2-s}{s^2}+\frac{s(n - 1)}{s^2}
        - 1, \\ 
        &=\frac{\E[z^4] + s - 1 + n - 1 - s}{s}
        =\frac{\E[z^4] + n - 2 }{s}.
    \end{align}
    \end{proof}
    \end{lemma}
    Combining the previous lemmas gives:
    \begin{lemma}\label{lemma:M-operator-norm-bound}
        $\mathbb{P}\left[\opnorm{\mathbf{M}}>\beta\right] < \left(1+\frac{2}{\epsilon}\right)^n \frac{\mathbb{E}\left[z^4\right]+n - 2}{s \beta^2 (1-\epsilon)^2}$ for $\mathbf{M}$ as defined \eqref{eq:M-definition}, for any $\beta, \epsilon > 0$, where $\mathbb{E}[z^4]$ is the fourth moment of the coordinates of $\bm{z}_j$.
        \begin{proof}
            Combining \Cref{lemma:expectation-of-covariance-approx,lemma:eps-net-operator-norm-bound}, and noting that there exists an $\epsilon$-net of size at most $\left(1+\frac{2}{\epsilon}\right)^n$ \citep{vershynin2010introduction} yields the result.
        \end{proof}
    \end{lemma}
    Finally, applying \Cref{lemma:M-operator-norm-bound} to \eqref{eq:gradient-bound1} gives the desired bound:
    \begin{align}
        |\tilde{g}_k(\theta) - g_k(\theta)| &\leq \sum_{i=1}^n |\lambda_i| \opnorm{\mathbf{M}} < \left(1+\frac{2}{\epsilon}\right)^n \frac{\mathbb{E}\left[z^4\right]+n - 2}{s \beta^2 (1-\epsilon)^2} \left(\sum_{i=1}^{n} |\lambda_i| \right),
    \end{align}
    where $\left(\sum_{i=1}^{n} |\lambda_i| \right)< \lambda^{\mathrm{max}} n$.
\end{proof}
\Cref{thm:bounded-gradient-error} implies a bound on the norm of the gradient error on the optimisation domain $\Theta$ by a simple union bound over each coordinate of $\tilde{\bm{g}}(\bm{\theta})$.
\begin{theorem}\label{thm:grad-norm-bound}
    Under the assumptions of \Cref{thm:bounded-gradient-error},
    \begin{align}
        \mathbb{P} \left[ \|\tilde{\bm{g}}(\bm{\theta}) - \bm{g}(\bm{\theta}) \| > \beta \right] < \delta
        \qquad \text{if} \qquad
        s > d_{\bm{\theta}} \left(1+\frac{2}{\epsilon}\right)^n \frac{\mathbb{E}\left[z^4\right]+n - 2}{\delta \beta^2 (1-\epsilon)^2} n \lambda^{\mathrm{max}},
    \end{align}
    i.e.\ the approximate gradient $\bm{g}(\bm{\theta})$ will be within $\beta$ of the true gradient on the entire optimisation space $\Theta$ with probability at least $(1-\delta)$.
\end{theorem}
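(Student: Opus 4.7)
The plan is to combine the uniform gradient error bound of Theorem~\ref{thm:grad-norm-bound} with a line-integral argument made possible by the conservativity of $\tilde{\bm{g}}$. First I set $D = \sup_{\bm{\theta}_a, \bm{\theta}_b \in \Theta} \|\bm{\theta}_a - \bm{\theta}_b\|$, which is finite since $\Theta$ is closed and bounded, and invoke Theorem~\ref{thm:grad-norm-bound} with $\beta = \gamma / D$ to pick $s$ large enough that $\|\tilde{\bm{g}}(\bm{\theta}) - \nabla \mathcal{L}(\bm{\theta})\| \leq \beta$ holds simultaneously for every $\bm{\theta} \in \Theta$ with probability at least $1-\delta$. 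The bound is genuinely uniform in $\bm{\theta}$ because all $\bm{\theta}$-dependence in Theorem~\ref{thm:bounded-gradient-error} is absorbed into the worst-case eigenvalue constant $\lambda^{\mathrm{max}}$, while the operator-norm tail bound for $\mathbf{M}$ involves only the fixed probe vectors $\bm{z}_j$.

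Second, because $\tilde{\bm{g}}$ is conservative on $\Theta$, there exists a potential function $\tilde{\mathcal{L}}: \Theta \to \mathbb{R}$ with $\nabla \tilde{\mathcal{L}} = \tilde{\bm{g}}$, unique up to an additive constant; this is the natural surrogate objective implicitly maximised by any first-order optimiser that uses $\tilde{\bm{g}}$, so $\tilde{\bm{\theta}}^*$ is a maximiser of $\tilde{\mathcal{L}}$ on $\Theta$. Convexity of $\Theta$ guarantees that for any two points $\bm{\theta}_a, \bm{\theta}_b \in \Theta$ the segment $\bm{\theta}(t) = (1-t)\bm{\theta}_a + t \bm{\theta}_b$ lies in $\Theta$, so applying the fundamental theorem of calculus to $\tilde{\mathcal{L}} - \mathcal{L}$ along this path and then Cauchy--Schwarz yields
\[
    \bigl|\,[\tilde{\mathcal{L}}(\bm{\theta}_b) - \tilde{\mathcal{L}}(\bm{\theta}_a)] - [\mathcal{L}(\bm{\theta}_b) - \mathcal{L}(\bm{\theta}_a)]\,\bigr| \leq \beta \, \|\bm{\theta}_b - \bm{\theta}_a\| \leq \beta D = \gamma.
\]

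Third, specialising this two-point identity to $\bm{\theta}_a = \bm{\theta}^*$ and $\bm{\theta}_b = \tilde{\bm{\theta}}^*$ and rearranging gives
\[
    \mathcal{L}(\tilde{\bm{\theta}}^*) - \mathcal{L}(\bm{\theta}^*) \geq [\tilde{\mathcal{L}}(\tilde{\bm{\theta}}^*) - \tilde{\mathcal{L}}(\bm{\theta}^*)] - \gamma \geq -\gamma,
\]
where the final inequality uses $\tilde{\mathcal{L}}(\tilde{\bm{\theta}}^*) \geq \tilde{\mathcal{L}}(\bm{\theta}^*)$ by optimality of $\tilde{\bm{\theta}}^*$ for $\tilde{\mathcal{L}}$. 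This is exactly the desired conclusion $\mathcal{L}(\tilde{\bm{\theta}}^*) \geq \mathcal{L}(\bm{\theta}^*) - \gamma$.

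The hard part is really just verifying the first step: that Theorem~\ref{thm:grad-norm-bound} delivers a bound holding uniformly on $\Theta$ rather than merely pointwise, which would otherwise demand an additional union bound or covering argument in $\Theta$. Here uniformity is essentially free because the probe vectors $\bm{z}_j$ are frozen across hyperparameter updates (precisely the warm-start condition), so the sole source of randomness is independent of $\bm{\theta}$, and the eigenvalue bounds hypothesised in Theorem~\ref{thm:bounded-gradient-error} hold uniformly on $\Theta$ by assumption. The conservativity assumption is likewise essential: without it there is no surrogate potential $\tilde{\mathcal{L}}$, and the phrase \emph{the objective implied by $\tilde{\bm{g}}$} would be ill-defined, so the line-integral identity is the most economical way to transport the gradient error into a function-value error.
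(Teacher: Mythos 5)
You have proved the wrong statement. The theorem you were asked to establish, \Cref{thm:grad-norm-bound}, is the concentration bound $\mathbb{P}\left[\|\tilde{\bm{g}}(\bm{\theta}) - \bm{g}(\bm{\theta})\| > \beta\right] < \delta$ on the Euclidean norm of the gradient error, given the stated sample-size condition. Your argument instead \emph{assumes} this bound as an ingredient (``invoke Theorem~\ref{thm:grad-norm-bound} with $\beta = \gamma/D$'') and then runs the conservative-field/line-integral argument to conclude $\mathcal{L}(\tilde{\bm{\theta}}^*) \geq \mathcal{L}(\bm{\theta}^*) - \gamma$. That conclusion is \Cref{thm:apd-bound-on-biased-optimum} (equivalently \Cref{thm:bound-on-biased-optimum}), a downstream result, and your derivation of it closely mirrors the paper's own proof of that theorem --- but it is circular as a proof of the statement actually assigned, and it never arrives at the assigned conclusion.

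The missing content is the passage from the coordinate-wise bound of \Cref{thm:bounded-gradient-error}, which controls $|\tilde{g}_k(\bm{\theta}) - \partial \mathcal{L}(\bm{\theta})/\partial \theta_k|$ for a single index $k$, to a bound on the full vector norm $\|\tilde{\bm{g}}(\bm{\theta}) - \bm{g}(\bm{\theta})\|$. The paper obtains this by a union bound over the $d_{\bm{\theta}}$ coordinates, which is precisely where the extra factor of $d_{\bm{\theta}}$ in the sample-size requirement comes from; your proposal neither performs this union bound nor accounts for that factor. (To be fully rigorous one must also decide how the per-coordinate tolerance relates to the vector-norm tolerance --- if each coordinate deviates by at most $\beta$ the norm can be as large as $\beta\sqrt{d_{\bm{\theta}}}$, so one should apply \Cref{thm:bounded-gradient-error} with tolerance $\beta/\sqrt{d_{\bm{\theta}}}$ and failure probability $\delta/d_{\bm{\theta}}$ per coordinate; the paper's single factor of $d_{\bm{\theta}}$ is itself loose on this point, but some such accounting is the substance of the proof and is absent from yours.) Your closing remarks about uniformity over $\Theta$ and the role of frozen probe vectors are reasonable observations, but they address the hypotheses of \Cref{thm:bounded-gradient-error} rather than the step that turns its coordinate-wise conclusion into the norm bound being claimed.
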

Now, if $\tilde{\bm{g}}(\bm{\theta})$ is a conservative field, and so is implicitly a gradient of some (approximate) objective $\tilde{\mathcal{L}}: \Theta \to \R$, the above result allows us to bound the error on the solution found when optimising using the approximate gradient $\tilde{\bm{g}}$ instead of the actual gradient $\bm{g} = \nabla \mathcal{L}$.
However, in general, $\tilde{\bm{g}}(\bm{\theta})$ need not be strictly conservative. 
In practice, since $\tilde{\bm{g}}(\bm{\theta})$ converges to a conservative field the more samples we take, we may assume that it is close enough to being conservative for the purposes of optimisation on hardware with finite numerical precision.
Assuming that $\tilde{\bm{g}}(\boldsymbol \theta)$ is conservative allows us to show the following bound on the optimum found when optimising using  $\tilde{\bm{g}}(\boldsymbol \theta)$, which is a restatement of \Cref{thm:bound-on-biased-optimum}:

\begin{theorem}\label{thm:apd-bound-on-biased-optimum}
    Let $\tilde{\bm{g}}$ and $\mathcal{L}$ be defined as in \Cref{thm:bounded-gradient-error}. Assume $\tilde{\bm{g}}: \Theta \to \R$ is a conservative field.
    Assume the optimisation domain $\Theta$ is convex, closed and bounded.
    Then, given a sufficiently large number of samples s, a maximum $\tilde{\bm{\theta}}^*$ obtained by maximising the objective implied by the approximate gradients $\tilde{\bm{g}}$ will be $\gamma$-close in terms of the true objective $\mathcal{L}$ to the true maximum $\bm{\theta}^*$ of the objective $\mathcal{L}$:
    \begin{align}
        \mathcal{L}(\tilde{\bm{\theta}}^*) \geq \mathcal{L}(\bm{\theta}^*) - \gamma
        \qquad \text{if} \qquad
        s > d_{\bm{\theta}} \left(1+\frac{2}{\epsilon}\right)^n \frac{\mathbb{E}\left[z^4\right]+n - 2}{\delta \gamma^2 (1-\epsilon)^2} n \lambda^{\mathrm{max}} \Delta\Theta,
    \end{align}
    with probability at least $1-\delta$,
    where $\Delta \Theta \overset{\mathrm{def}}{=} \sup_{\bm{\theta}, \bm{\theta}' \in \Theta} \| \bm{\theta}' - \bm{\theta} \|$ is the maximum distance between two elements in $\Theta$. 
    \begin{proof}
        Let $\tilde{\mathcal{L}}: \Theta \to \R$ be an approximate objective implied by the gradient field $\tilde{\bm{g}}$, namely a scalar field such that $\nabla \tilde{\mathcal{L}} = \tilde{\bm{g}}$. Such a scalar field exists if $\tilde{\bm{g}}$ is a conservative field, and is unique up to a constant (which does not affect the optimum).
        
        Assume that $s$ is sufficiently large such that the gradient difference $\|\tilde{\bm{g}} - \bm{g}\|$ is bounded by $\frac{\gamma}{\Delta\Theta}$ with probability at least $1-\delta$. As per \Cref{thm:grad-norm-bound}, this will be the case when
        \begin{align}
            s >  d_{\bm{\theta}} \left(1+\frac{2}{\epsilon}\right)^n \frac{\mathbb{E}\left[z^4\right]+n - 2}{\delta \gamma^2 (1 - \epsilon)^2} n \lambda^{\mathrm{max}} \Delta\Theta.
        \end{align}
        
        For any two points $\bm{\theta}, \bm{\theta}' \in \Theta$, with $\Delta \bm{\theta} \overset{\mathrm{def}}{=} \bm{\theta}' - \bm{\theta}$, we have that
        \begin{align}
            &\quad \left| \left(\mathcal{L}(\bm{\theta}') - \mathcal{L}(\bm{\theta})\right) + \left(\tilde{\mathcal{L}}(\bm{\theta}') - \tilde{\mathcal{L}}(\bm{\theta})\right)\right| \\
            &\quad\color{gray}\triangle\ \text{Replace difference in values with integral along path from $\bm{\theta}$ to $\bm{\theta}'$} \nonumber\\
            &=\left| \int_0^1 \frac{\partial}{\partial t}\mathcal{L}\left(\bm{\theta} + \Delta \bm{\theta} t  \right) dt - \int_0^1 \frac{\partial}{\partial t}\tilde{\mathcal{L}}\left(\bm{\theta} + \Delta \bm{\theta} t  \right) dt \right|, \\
            &=\left| \int_0^1 \Delta \bm{\theta} \cdot \nabla\mathcal{L}\left(\bm{\theta} + \Delta \bm{\theta} t  \right) dt - \int_0^1 \Delta \bm{\theta} \cdot \nabla\tilde{\mathcal{L}}\left(\bm{\theta} + \Delta \bm{\theta} t  \right) dt \right|, \\
            &=\left| \int_0^1 \Delta \bm{\theta} \cdot \left(\nabla \mathcal{L} - \nabla \tilde{\mathcal{L}}\right)\left(\bm{\theta} + \Delta \bm{\theta} t  \right)  dt \right|, \\
            &\leq\int_0^1 \left| \Delta \bm{\theta} \cdot \left(\nabla \mathcal{L} - \nabla \tilde{\mathcal{L}}\right)\left(\bm{\theta} + \Delta \bm{\theta} t  \right)\right| dt, \\
            &=\int_0^1 \left\lVert \Delta \bm{\theta} \right\rVert \left\lVert \left(\nabla \mathcal{L} - \nabla \tilde{\mathcal{L}}\right)\left(\bm{\theta} + \Delta \bm{\theta} t  \right) \right\rVert dt 
            \leq \int_0^1 \| \Delta \bm{\theta} \|\left|\frac{\gamma}{\Delta\Theta}  \right| dt \leq \gamma. \\
            &\quad\color{gray}\triangle\ \text{Difference of gradients bounded by \Cref{thm:grad-norm-bound}} \nonumber\\
        \end{align}
        Hence,
        \vspace{-1cm}
        \begin{align}
            \mathcal{L}(\bm{\theta}^*) - \mathcal{L}(\tilde{\bm{\theta}}^*) &\leq  \mathcal{L}(\bm{\theta}^*) - \mathcal{L}(\tilde{\bm{\theta}}^*) - \overbrace{\left(\tilde{\mathcal{L}}(\bm{\theta}^*) - \tilde{\mathcal{L}}(\tilde{\bm{\theta}}^*)\right)}^{\parbox{3cm}{\color{gray}\footnotesize Negative because $\tilde{\bm{\theta}}^*$ is a maximum of $\tilde{\mathcal{L}}$}}, \\ 
            &\leq \left|   \mathcal{L}(\bm{\theta}^*) - \mathcal{L}(\tilde{\bm{\theta}}^*) - \left(\tilde{\mathcal{L}}(\bm{\theta}^*) - \tilde{\mathcal{L}}(\tilde{\bm{\theta}}^*)\right)\right| \leq \gamma,
        \end{align}
        which gives the bound in the theorem.
    \end{proof}
\end{theorem}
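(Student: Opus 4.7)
The plan is to turn the theorem into two essentially separate tasks: (i) establish a uniform bound on the gradient error $\tilde{\bm{g}} - \nabla\mathcal{L}$ over the entire optimisation domain $\Theta$ in terms of the number of probe samples $s$, and (ii) convert that uniform gradient bound into a suboptimality bound on the objective value by integrating along a line in $\Theta$, using that $\tilde{\bm{g}}$ is conservative.

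For step (i), I would first express the pointwise gradient error at a single $\bm{\theta}$ in terms of the eigendecomposition of $\mathbf{H}_{\bm{\theta}}^{-1} \partial \mathbf{H}_{\bm{\theta}}/\partial \theta_k$, so that the error becomes a weighted sum of terms $\bm{q}_i^\mathsf{T} \mathbf{M} \bm{p}_i$ where $\mathbf{M} = \frac{1}{s}\sum_j \bm{z}_j\bm{z}_j^\mathsf{T} - \mathbf{I}$. The operator norm $\|\mathbf{M}\|_{\mathrm{op}}$ controls each term. I would bound $\mathbb{E}\|\mathbf{M}\bm{c}\|^2$ for any unit vector $\bm{c}$ using the assumed pairwise independence and finite fourth moments of the entries of $\bm{z}_j$; this should yield a quantity of order $(n + \mathbb{E}[z^4])/s$. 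To promote this pointwise second-moment control to a bound on $\|\mathbf{M}\|_{\mathrm{op}}$, I would use an $\epsilon$-net of cardinality $(1+2/\epsilon)^n$ on the unit sphere combined with Markov's inequality, yielding $\mathbb{P}[\|\mathbf{M}\|_{\mathrm{op}} > \beta]$ small when $s$ is large. Finally, bounding $|\lambda_i|$ by $\lambda^{\max}$, a union bound over the $d_{\bm{\theta}}$ coordinates of the gradient gives a uniform-in-$\Theta$ bound $\|\tilde{\bm{g}}(\bm{\theta}) - \nabla\mathcal{L}(\bm{\theta})\| \leq \beta$ with probability at least $1-\delta$, provided $s$ scales appropriately with $n, d_{\bm{\theta}}, \lambda^{\max}, \beta, \delta$.

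For step (ii), using that $\tilde{\bm{g}}$ is conservative, define $\tilde{\mathcal{L}}$ with $\nabla \tilde{\mathcal{L}} = \tilde{\bm{g}}$, which is unique up to an additive constant irrelevant for optimisation. For arbitrary $\bm{\theta}, \bm{\theta}' \in \Theta$, write both $\mathcal{L}(\bm{\theta}') - \mathcal{L}(\bm{\theta})$ and $\tilde{\mathcal{L}}(\bm{\theta}') - \tilde{\mathcal{L}}(\bm{\theta})$ as line integrals along the segment from $\bm{\theta}$ to $\bm{\theta}'$ (which lies in $\Theta$ by convexity), take the difference under the integral, and apply Cauchy--Schwarz together with the uniform gradient bound from step (i), setting $\beta = \gamma/\Delta\Theta$. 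This yields $|(\mathcal{L}(\bm{\theta}') - \mathcal{L}(\bm{\theta})) - (\tilde{\mathcal{L}}(\bm{\theta}') - \tilde{\mathcal{L}}(\bm{\theta}))| \leq \gamma$. Applying this with $\bm{\theta} = \tilde{\bm{\theta}}^*$ and $\bm{\theta}' = \bm{\theta}^*$, and using that $\tilde{\mathcal{L}}(\bm{\theta}^*) - \tilde{\mathcal{L}}(\tilde{\bm{\theta}}^*) \leq 0$ by maximality of $\tilde{\bm{\theta}}^*$ for $\tilde{\mathcal{L}}$, delivers $\mathcal{L}(\bm{\theta}^*) - \mathcal{L}(\tilde{\bm{\theta}}^*) \leq \gamma$.

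The hard part is step (i): controlling the operator norm of $\mathbf{M}$ uniformly in $\bm{\theta}$ in a way that remains sharp. Computing $\mathbb{E}\|\mathbf{M}\bm{c}\|^2$ requires expanding a fourth-order sum in the $z_{p,i}$ and carefully separating which index collisions give non-vanishing contributions; this is where the fourth-moment assumption enters and where a dimensional factor $n$ arises. The subsequent $\epsilon$-net step introduces the $(1+2/\epsilon)^n$ factor, which, although exponential in $n$, is absorbed by sufficiently large $s$ and is the reason the theorem is qualitative (``given a sufficiently large number of samples'') rather than quantitatively tight. Everything else in the argument is conceptually routine once this concentration result is in hand.
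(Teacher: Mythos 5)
Your proposal follows essentially the same route as the paper: the eigendecomposition of $\mathbf{H}_{\bm{\theta}}^{-1}\,\partial\mathbf{H}_{\bm{\theta}}/\partial\theta_k$ reducing the error to $\opnorm{\mathbf{M}}$, the fourth-moment computation of $\mathbb{E}\|\mathbf{M}\bm{c}\|^2$, the $\epsilon$-net plus Markov bound, the union bound over the $d_{\bm{\theta}}$ coordinates, and finally the line-integral argument with Cauchy--Schwarz and the maximality of $\tilde{\bm{\theta}}^*$ for $\tilde{\mathcal{L}}$. The argument is correct as planned (and your writing of the value discrepancy as a difference rather than the sum that appears, apparently as a typo, in the paper's display is the right version), so there is nothing substantive to add.
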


\section{Implementation Details}
\label{apd:implementation_details}
Our implementation uses the \texttt{JAX} library \citep{jax2018github} and all experiments were conducted on A100 GPUs using double floating point precision.
The \texttt{softplus} function was used to enforce positive value constraints during hyperparameter optimisation.
During each step of marginal likelihood optimisation, the linear system solvers were run until all linear systems in the batch reached a relative residual norm $\lVert \mathbf{H}_{\bm{\theta}} \bm{v} - \bm{b} \rVert / \lVert \bm{b} \rVert$ of less than $\epsilon_{\mathrm{rel}}^{\mathrm{mean}} = 0.01$ for the linear system $\mathbf{H}_{\bm{\theta}} \bm{v}_{\bm{y}} = \bm{y}$, corresponding to the mean, and $\epsilon_{\mathrm{rel}}^{\mathrm{samples}} = 0.1$ for the linear systems $\mathbf{H}_{\bm{\theta}} \bm{v}_j = \bm{z}_j$, corresponding to the samples.
Conjugate gradients and alternating projections keep track of the residual as part of the algorithm.
For stochastic gradient descent, we estimate the current residual by keeping a residual vector in memory and updating it sparsely whenever we compute the gradient on a mini-batch of data, leveraging the property that the gradient is equal to the residual.
In practice, we find that this estimates an approximate upper bound on the true residual.
For conjugate gradients, we did not use any preconditioner.
For alternating projections, we used a block size of 2000.
For stochastic gradient descent, we used a mini-batch size of 1000, momentum of 0.9, no Polyak averaging, and learning rates of 90, 20, 100, 20, and 30 respectively for \textsc{pol}, \textsc{elevators}, \textsc{bike}, \textsc{protein} and \textsc{keggdirected}, which were selected by performing a grid search.
\newpage
\section{Additional Empirical Results}
\begin{table}[ht]
\caption{Test root-mean-square errors, test log-likelihoods, total runtimes and solver runtimes in minutes after 100 steps of marginal likelihood optimisation, and speed-up per dataset due to warm start (mean $\pm$ standard error over 10 dataset splits).}
\label{tab:results}
\centering
\small
\begin{tabular}{l l c c c c c}
\toprule
\multicolumn{2}{c}{} & Test RMSE & Test LLH & Total Runtime & Solver Runtime & Speed-Up \\
\midrule
\multirow{6}{*}{\rotatebox[origin=r]{90}{\textsc{pol}}}
\multirow{6}{*}{\rotatebox[origin=r]{90}{\footnotesize{$n = 13500$, $d = 26$}}}
& CG & 0.075 $\pm$ 0.001 & 1.268 $\pm$ 0.008 & 7.857 $\pm$ 0.111 & 7.641 $\pm$ 0.110 & --- \\
& \footnotesize{+ ws} & 0.075 $\pm$ 0.001 & 1.268 $\pm$ 0.009 & 2.003 $\pm$ 0.027 & 1.790 $\pm$ 0.026 & 3.9 $\times$ \\
\cmidrule{3-7}
& AP & 0.075 $\pm$ 0.001 & 1.269 $\pm$ 0.008 & 22.390 $\pm$ 0.331 & 22.158 $\pm$ 0.326 & --- \\
& \footnotesize{+ ws} & 0.075 $\pm$ 0.001 & 1.268 $\pm$ 0.009 & 0.993 $\pm$ 0.015 & 0.780 $\pm$ 0.014 & 22.6 $\times$\\
\cmidrule{3-7}
& SGD & 0.075 $\pm$ 0.001 & 1.266 $\pm$ 0.010 & 41.306 $\pm$ 0.201 & 41.215 $\pm$ 0.201 & --- \\
& \footnotesize{+ ws} & 0.075 $\pm$ 0.001 & 1.268 $\pm$ 0.007 & 3.077 $\pm$ 0.016 & 2.989 $\pm$ 0.016 & 13.4 $\times$ \\
\midrule
\multirow{6}{*}{\rotatebox[origin=r]{90}{\textsc{elevators}}}
\multirow{6}{*}{\rotatebox[origin=r]{90}{\footnotesize{$n = 14940$, $d = 18$}}}
& CG & 0.355 $\pm$ 0.003 & -0.386 $\pm$ 0.007 & 2.758 $\pm$ 0.044 & 2.542 $\pm$ 0.042 & --- \\
& \footnotesize{+ ws} & 0.355 $\pm$ 0.003 & -0.386 $\pm$ 0.007 & 1.072 $\pm$ 0.014 & 0.858 $\pm$ 0.012 & 2.6 $\times$ \\
\cmidrule{3-7}
& AP & 0.355 $\pm$ 0.003 & -0.386 $\pm$ 0.007 & 13.547 $\pm$ 0.345 & 13.331 $\pm$ 0.344 & --- \\
& \footnotesize{+ ws} & 0.355 $\pm$ 0.003 & -0.386 $\pm$ 0.007 & 0.516 $\pm$ 0.006 & 0.303 $\pm$ 0.004 & 26.2 $\times$\\
\cmidrule{3-7}
& SGD & 0.355 $\pm$ 0.003 & -0.385 $\pm$ 0.007 & 4.921 $\pm$ 0.069 & 4.685 $\pm$ 0.015 & --- \\
& \footnotesize{+ ws} & 0.355 $\pm$ 0.003 & -0.386 $\pm$ 0.007 & 0.980 $\pm$ 0.065 & 0.748 $\pm$ 0.004 & 5.2 $\times$ \\
\midrule
\multirow{6}{*}{\rotatebox[origin=r]{90}{\textsc{bike}}}
\multirow{6}{*}{\rotatebox[origin=r]{90}{\footnotesize{$n = 15642$, $d = 17$}}}
& CG & 0.033 $\pm$ 0.003 & 2.150 $\pm$ 0.018 & 7.689 $\pm$ 0.128 & 7.451 $\pm$ 0.126 & --- \\
& \footnotesize{+ ws} & 0.033 $\pm$ 0.003 & 2.150 $\pm$ 0.017 & 2.180 $\pm$ 0.038 & 1.945 $\pm$ 0.036 & 3.5 $\times$ \\
\cmidrule{3-7}
& AP & 0.033 $\pm$ 0.003 & 2.151 $\pm$ 0.018 & 12.306 $\pm$ 0.210 & 12.068 $\pm$ 0.207 & --- \\
& \footnotesize{+ ws} & 0.033 $\pm$ 0.003 & 2.153 $\pm$ 0.018 & 0.904 $\pm$ 0.014 & 0.670 $\pm$ 0.012 & 13.6 $\times$\\
\cmidrule{3-7}
& SGD & 0.033 $\pm$ 0.003 & 2.179 $\pm$ 0.020 & 81.843 $\pm$ 1.373 & 81.676 $\pm$ 1.372 & --- \\
& \footnotesize{+ ws} & 0.032 $\pm$ 0.003 & 2.149 $\pm$ 0.031 & 6.733 $\pm$ 0.168 & 6.567 $\pm$ 0.168 & 12.2 $\times$ \\
\midrule
\multirow{6}{*}{\rotatebox[origin=r]{90}{\textsc{protein}}}
\multirow{6}{*}{\rotatebox[origin=r]{90}{\footnotesize{$n = 41157$, $d = 9$}}}
& CG & 0.503 $\pm$ 0.004 & -0.587 $\pm$ 0.010 & 31.438 $\pm$ 0.476 & 29.850 $\pm$ 0.458 & --- \\
& \footnotesize{+ ws} & 0.503 $\pm$ 0.004 & -0.588 $\pm$ 0.010 & 11.270 $\pm$ 0.156 & 9.685 $\pm$ 0.138 & 2.8 $\times$ \\
\cmidrule{3-7}
& AP & 0.503 $\pm$ 0.004 & -0.587 $\pm$ 0.010 & 45.417 $\pm$ 0.622 & 43.829 $\pm$ 0.607 & --- \\
& \footnotesize{+ ws} & 0.503 $\pm$ 0.004 & -0.587 $\pm$ 0.010 & 5.519 $\pm$ 0.068 & 3.934 $\pm$ 0.053 & 8.2 $\times$\\
\cmidrule{3-7}
& SGD & 0.504 $\pm$ 0.004 & -0.587 $\pm$ 0.010 & 46.915 $\pm$ 0.350 & 44.661 $\pm$ 0.349 & --- \\
& \footnotesize{+ ws} & 0.504 $\pm$ 0.004 & -0.589 $\pm$ 0.009 & 7.874 $\pm$ 0.024 & 5.621 $\pm$ 0.024 & 6.0 $\times$ \\
\midrule
\multirow{6}{*}{\rotatebox[origin=r]{90}{\textsc{keggdirected}}}
\multirow{6}{*}{\rotatebox[origin=r]{90}{\footnotesize{$n = 43945$, $d = 20$}}}
& CG & 0.084 $\pm$ 0.002 & 1.082 $\pm$ 0.017 & 64.290 $\pm$ 0.768 & 61.902 $\pm$ 0.760 & --- \\
& \footnotesize{+ ws} & 0.084 $\pm$ 0.002 & 1.081 $\pm$ 0.017 & 18.807 $\pm$ 0.228 & 16.415 $\pm$ 0.220 & 3.4 $\times$ \\
\cmidrule{3-7}
& AP & 0.084 $\pm$ 0.002 & 1.082 $\pm$ 0.017 & 62.235 $\pm$ 0.625 & 59.848 $\pm$ 0.618 & --- \\
& \footnotesize{+ ws} & 0.084 $\pm$ 0.002 & 1.081 $\pm$ 0.018 & 4.857 $\pm$ 0.054 & 2.464 $\pm$ 0.046 & 12.8 $\times$\\
\cmidrule{3-7}
& SGD & 0.084 $\pm$ 0.002 & 1.081 $\pm$ 0.019 & 360.436 $\pm$ 4.079 & 357.734 $\pm$ 4.076 & --- \\
& \footnotesize{+ ws} & 0.084 $\pm$ 0.002 & 1.073 $\pm$ 0.014 & 48.721 $\pm$ 0.548 & 46.020 $\pm$ 0.545 & 7.4 $\times$ \\
\bottomrule
\end{tabular}
\end{table}
\label{apd:results}
\begin{figure}[t]
    \centering
    \includegraphics{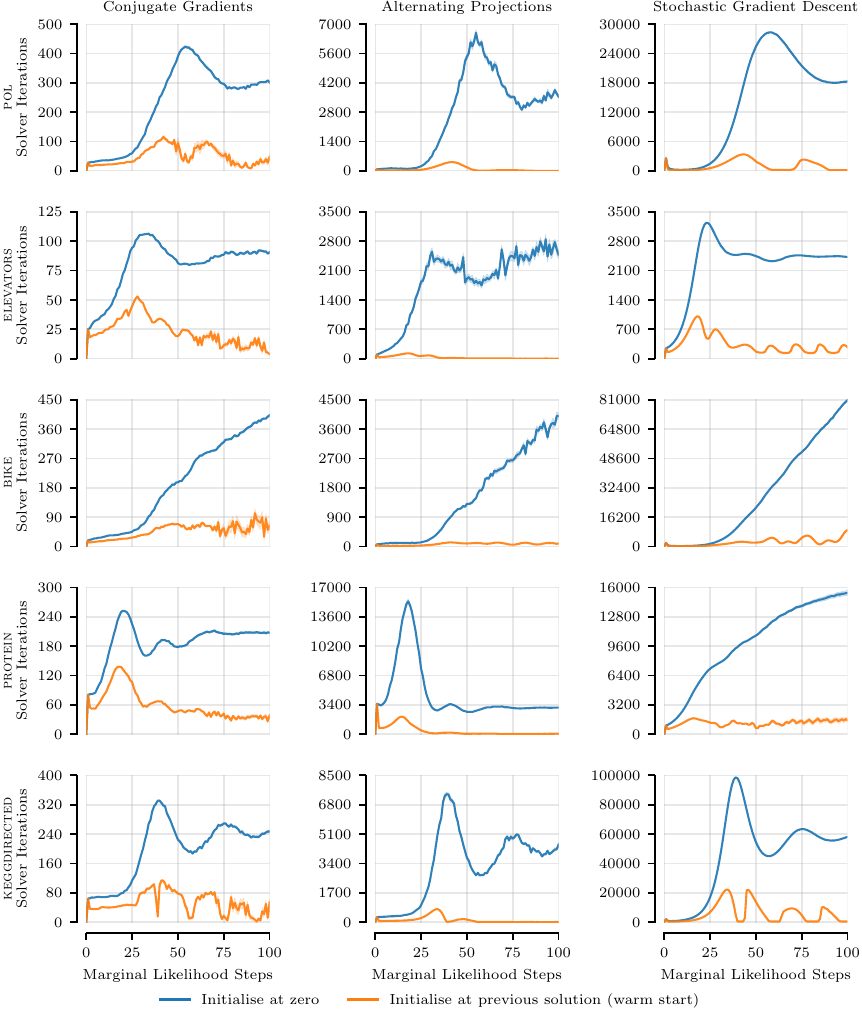}
    \caption{Evolution of the required number of linear system solver iterations at each step of marginal likelihood optimisation on different datasets. Initialising at the solution of the previous step reduces the number of required solver iterations with varying effectiveness among different solvers and datasets.}
    \label{fig:solver_iterations}
\end{figure}
\begin{figure}[t]
    \centering
    \includegraphics{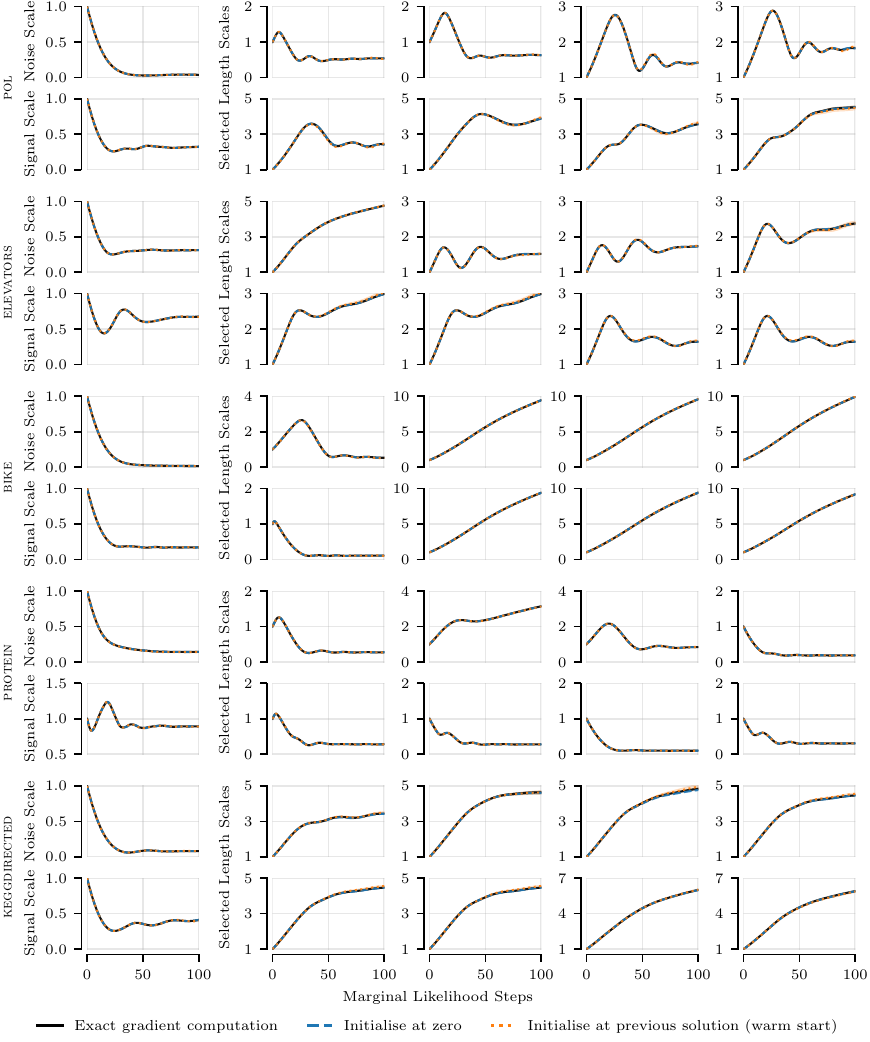}
    \caption{Evolution of hyperparameters during marginal likelihood optimisation on different datasets using conjugate gradients as linear system solver. The behaviour of exact gradient computation using Cholesky factorisation is obtained when initialising at zero or at the previous solution (warm start). The latter does not degrade performance.}
    \label{fig:hyperparameter_trace_cg}
\end{figure}
\begin{figure}[t]
    \centering
    \includegraphics{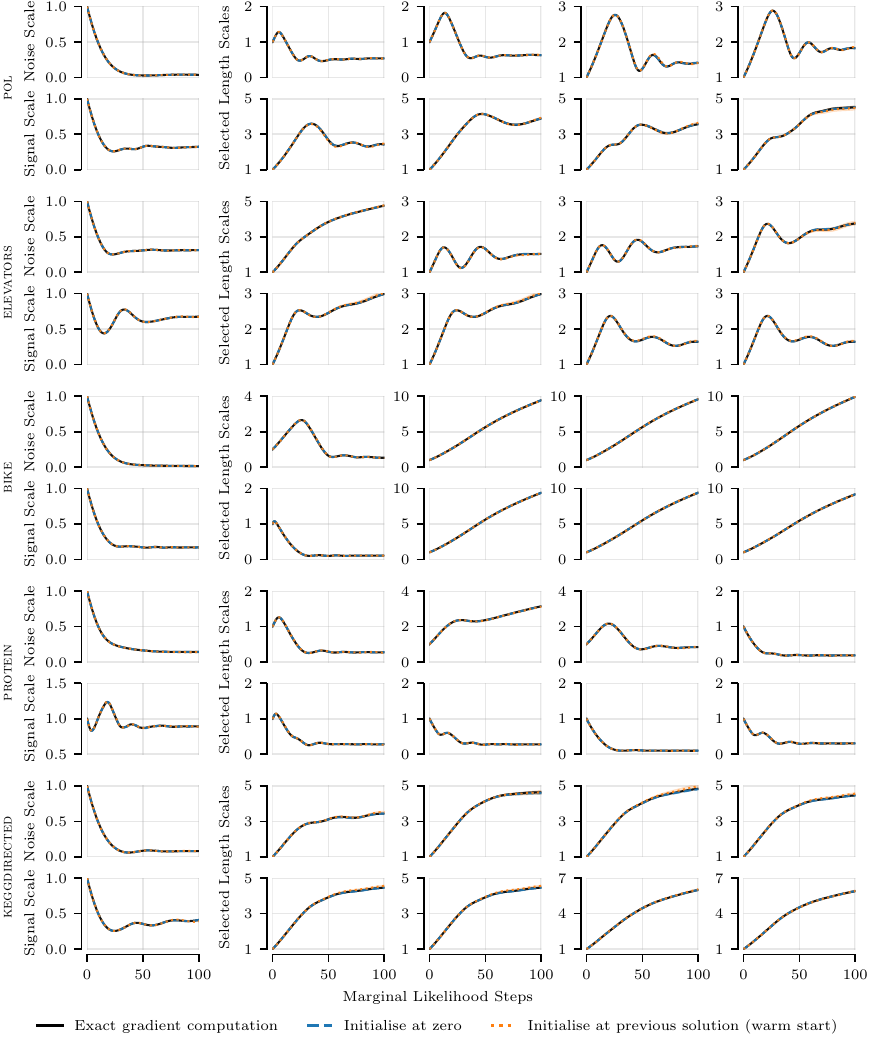}
    \caption{Evolution of hyperparameters during marginal likelihood optimisation on different datasets using alternating projections as linear system solver. The behaviour of exact gradient computation using Cholesky factorisation is obtained when initialising at zero or at the previous solution (warm start). The latter does not degrade performance.}
    \label{fig:hyperparameter_trace_ap}
\end{figure}
\begin{figure}[!ht]
    \centering
    \includegraphics{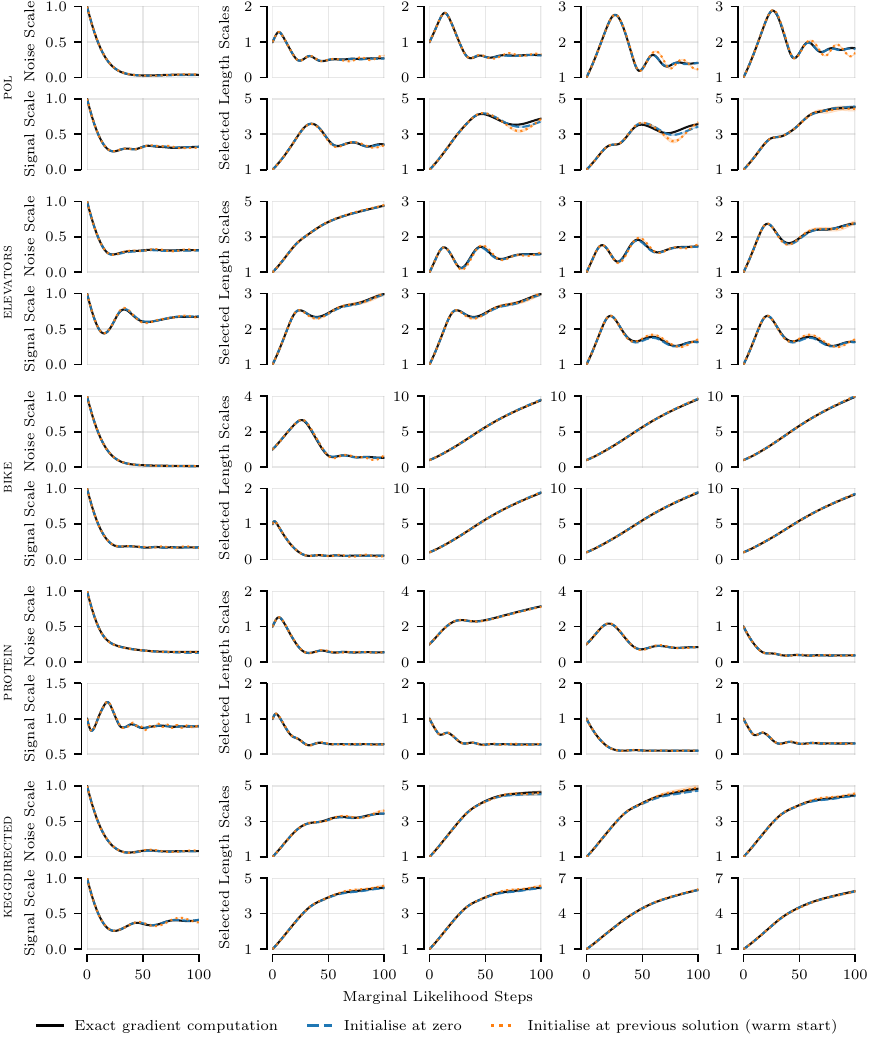}
    \caption{Evolution of hyperparameters during marginal likelihood optimisation on different datasets using stochastic gradient descent as linear system solver. The behaviour of exact gradient computation using Cholesky factorisation is obtained when initialising at zero or at the previous solution (warm start). The latter does not degrade performance.}
    \label{fig:hyperparameter_trace_sgd}
\end{figure}
\end{document}